\newtheorem{theorem}{Theorem}
\newtheorem{remark}{Remark}
\newcommand\R{\mathbb{R}}
\newcommand\nor{\mathcal{N}}
\newcommand\cov{\mathrm{cov}}
\newcommand\var{\mathrm{var}}
\newcommand\std{\mathrm{std}}
\newcommand\covlw{\mathrm{cov_{\dagger}}}
\newtheorem{problem}{Problem}
\newcommand\de[1]{\lsem #1 \rsem}
\newcommand\Hspace{\mathcal{H}}
\newcommand\Hlayer{\mathbf{H}}
\newcommand\hpoint{\textsc{h}}
\newcommand\Yset{\mathbf{T}}
\newcommand\ypoint{\mathbf{t}}
\newcommand\Xspace{\mathcal{X}}
\newcommand\Xlayer{\mathbf{X}}
\newcommand\xpoint{\textsc{x}}
\newcommand\Gfunction{\mathbf{G}}
\newcommand\Kernel{\mathbf{K}}
\newcommand\Ffunction{\mathbf{F}}
\newcommand\subsample{^{[h]}}
\newcommand\Prob{p}
\newcommand\LinearOperator{\boldsymbol{\beta}}
\newcommand\Dcs{{D}_\textsc{cs}}
\newcommand\Dkl{{D}_\textsc{kl}}
\newcommand\meanpoint{m}
\DeclareMathOperator*{\argmax}{arg\,max}
\title{Extreme Entropy Machines:\\ Robust information theoretic classification}
\date{}
\author{Wojciech Marian Czarnecki, Jacek Tabor
\\ Faculty
of Mathematics and Computer Scicence, \\ Jagiellonian University, Krakow, Poland,\\
 e-mail: \{wojciech.czarnecki, jacek.tabor\}@uj.edu.pl 
}
\begin{document}

\maketitle

\begin{abstract}
Most of the existing classification methods are aimed at minimization of empirical risk (through some simple point-based error measured with loss function) with added regularization. We propose to approach this problem in a more information theoretic way by investigating applicability of entropy measures as a classification
model objective function. We focus on quadratic Renyi's entropy and connected Cauchy-Schwarz Divergence which leads to the construction of Extreme Entropy Machines (EEM).

The main contribution of this paper is proposing a model based on the information theoretic concepts which on the one hand shows new, entropic perspective on known linear classifiers and on the other leads to a construction of very robust method competetitive with the state of the art non-information theoretic ones (including Support Vector Machines and Extreme Learning Machines). 

Evaluation on numerous problems spanning from small, simple ones from \textsc{UCI} repository to the large (hundreads of thousands of samples) extremely unbalanced (up to 100:1 classes' ratios) datasets shows wide applicability of the EEM in real life problems and that it scales well.

\end{abstract}

\section{Introduction}
%
%
%
%


There is no one, universal, perfect optimization criterion that can be used to train machine learning model. Even for linear classifiers one can find multiple objective functions,
error measures to minimize, regularization methods to include~\cite{kulkarni1998learning}. Most of the existing methods are aimed at minimization of empirical risk (through some simple point-based error measured
with loss function) with added regularization. We propose to approach this problem in more information theoretic way by investigating applicability of entropy measures as a classification
model objective function. We focus on quadratic Renyi's entropy and connected Cauchy-Schwarz Divergence.

One of the information theoretic concepts which has been found very effective in machine learning is the entropy measure. In particular the rule of maximum entropy modeling led to the construction of MaxEnt model and its structural generalization -- Conditional Random Fields which are considered state of the art in many applications. In this paper we propose to use Renyi's quadratic cross entropy as the measure of two density estimations divergence in order to find best linear classifier. It is a conceptually different approach than typical entropy models as it works in the input space instead of decisions distribution. As a result we obtain a model closely related to the Fischer's Discriminant (or more generally Linear Discriminant Analysis) which deepens the understanding of this classical approach. Together with a powerful extreme data transformation we obtain a robust, nonlinear model competetive with the state of the art models not based on information theory like Support Vector Machines (SVM~\cite{
Vapnik95}), Extreme Learning Machines (ELM~\cite{huang2004extreme}) or Least Squares Support Vector Machines (LS-SVM~\cite{suykens1999least}). We also show that under some simplifing assumptions ELM and LS-SVM can be seen through a perspective of information theory as their solutions are (up to some constants) identical to the ones obtained by proposed method.


Paper is structured as follows: first we recall some preliminary information regarding ELMs and Support Vector Machines, including Least Squares Support Vector Machines. Next we introduce our Extreme Entropy 
Machine (EEM) together with its kernelized extreme counterpart -- Extreme Entropy Kernel Machine (EEKM). We show some connections with existing models and some different perspectives
for looking at proposed model. In particular, we show how learning capabilities of EEMs (and EEKM) reasamble those of ELM (and LS-SVM respectively). During evaluation on over 20 binary
datasets (of various sizes and characteristics) we analyze generalization capabilities and learning speed. We show that it can be a valuable, robust alternative for existing methods. 
In particular, we show that it achieves analogous of ELM stability in terms of the hidden layer size. We conclude with future development plans and open problems.

\section{Preliminaries}

Let us begin with recalling some basic information regarding Extreme Learning Machines~\cite{huang2006extreme} and Support Vector Machines~\cite{Vapnik95} which are further used as a competiting models for proposed solution.
We focus here on the optimization problems being solved to underline some basic differences between these methods and EEMs.

\subsection{Extreme Learning Machines}

Extreme Learning Machines are relatively young models introduced by Huang et al.~\cite{huang2004extreme} which are based on the idea that single layer feed forward neural networks (SLFN) can be trained without iterative process by performing linear regression on the data mapped through random, nonlinear projection (random hidden neurons). More precisely speaking, basic ELM architecture consists of $d$ input neurons connected with each input space dimension which are fully connected with $h$ hidden neurons by the set of weights $w_j$ (selected randomly from some arbitrary distribution) and set of biases $b_j$ (also randomly selected). Given some generalized nonlinear activation function $\Gfunction$ one can express the hidden neurons activation matrix $\Hlayer$ for the whole training set $\Xlayer, \Yset = \{(\xpoint_i,\ypoint_i)\}_{i=1}^N$ such that $\xpoint_i \in \R^d$ and $\ypoint_i \in \{-1.+1\}$ as
$$
\Hlayer_{ij} = \Gfunction(\xpoint_i,w_j,b_j).
$$
we can formulate following optimization problem

\medskip

\noindent{\textbf{Optimization problem: Extreme Learning Machine} }
\begin{equation*}
\begin{aligned}
& \underset{\LinearOperator}{\text{minimize}}
& & \left \| \Hlayer\LinearOperator - \Yset \right \|^2  \\
& \text{where}
& & \Hlayer_{ij} = \Gfunction(\xpoint_i,w_j,b_j). ,\;  i = 1, \ldots ,N, j = 1, \ldots , h\\
\end{aligned}
\end{equation*}

\medskip
If we denote the weights between hidden layer and output neurons by $\LinearOperator$ it is easy to show~\cite{huang2006extreme} that putting
$$
\LinearOperator = \Hlayer^\dagger \Yset,
$$
gives the best solution in terms of mean squared error of the regression:
$$
\left \| \Hlayer\LinearOperator - \Yset \right \|^2 =\left \| \Hlayer(\Hlayer^\dagger \Yset) - \Yset \right \|^2 = \min_{\alpha \in \mathbb{R}^{h}} \left \| \Hlayer\alpha - \Yset \right \|^2
$$
where  $\Hlayer^\dagger$ denotes Moore-Penrose pseudoinverse of matrix $\Hlayer$. 

Final classification of the new point $x$ can be now performed analogously by classifying according to
$$ cl(\xpoint) = \textsc{sign}( \left [\begin{matrix}
                           \Gfunction(\xpoint,w_1,b_1) & \ldots & \Gfunction(\xpoint,w_d,b_d) 
                           \end{matrix}
\right ]\LinearOperator ). $$
%
As it is based on the oridinary least squares optimization, it is possible to balance it in terms of unbalanced datasets by performing weighted ordinary least squares. In such a scenario, given a vector $B$ such that $B_i$ is a square root of the inverse of the $\xpoint_i$'s class size and $B \cdot X$ denotes element wise multiplication between $B$ and $X$:

$$
\LinearOperator = (B \cdot \Hlayer)^\dagger B \cdot \Yset
$$

\subsection{Support Vector Machines and Least Squares Support Vector Machines}

One of the most well known classifiers of the last decade is Vapnik's Support Vector Machine (SVM~\cite{Vapnik95}), based on the principle of creating linear classifier that maximizes the separating margin between elements of two classes.

\medskip

\noindent{\textbf{Optimization problem: Support Vector Machine} }
\begin{equation*}
\begin{aligned}
& \underset{\LinearOperator,b.\xi}{\text{minimize}}
& & \frac{1}{2} \left \| \LinearOperator \right \|^2 + C \sum_{i=1}^N \xi_i \\
& \text{subject to}
& & \ypoint_i(\langle \LinearOperator, \xpoint_i\rangle + b) = 1 - \xi_i  ,\;  i = 1, \ldots ,N
\end{aligned}
\end{equation*}

\medskip
\noindent which can be further kernelized (delinearized) using any kernel $\Kernel$ (valid in the Mercer's sense):
\medskip

\noindent{\textbf{Optimization problem: Kernel Support Vector Machine} }
\begin{equation*}
\begin{aligned}
& \underset{\LinearOperator}{\text{maximize}}
& & \sum_{i=1}^N\LinearOperator_i - \frac{1}{2} \sum_{i,j=1}^N \LinearOperator_i \LinearOperator_j \ypoint_i \ypoint_j \Kernel(\xpoint_i,\xpoint_j) \\
& \text{subject to}
& & \sum_{i=1}^N \LinearOperator_i \ypoint_i = 0 \\
& & & 0 \leq \LinearOperator_i \leq C   ,\;  i = 1, \ldots ,N
\end{aligned}
\end{equation*}

\medskip

The problem is a quadratic optimization with linear constraints, which can be efficiently solved using quadratic programming techniques. Due to the use of hinge loss function on $\xi_i$ SVM attains very sparse solutions in terms of nonzero $\LinearOperator_i$. As a result, classifier does not have to remember the whole training set, but instead, the set of so called support vectors ($SV = \{ \xpoint_i : \LinearOperator_i > 0 \}$), and classify new point according to

$$
 cl(\xpoint) = \textsc{sign} \left (  \sum_{\xpoint_i \in SV} \LinearOperator_i \Kernel(\xpoint_i, \xpoint) + b \right ).
$$

It appears that if we change the loss function to the quadratic one we can greatly reduce the complexity of the resulting optimization problem, leading to the so called Least Squares Support Vector Machines (LS-SVM).

\medskip

\noindent{\textbf{Optimization problem: Least Squares Support Vector Machine} }
\begin{equation*}
\begin{aligned}
& \underset{\LinearOperator,b.\xi}{\text{minimize}}
& & \frac{1}{2} \left \| \LinearOperator \right \|^2 + C \sum_{i=1}^N \xi^2_i \\
& \text{subject to}
& & \ypoint_i(\langle \LinearOperator, \xpoint_i\rangle + b) = 1 - \xi_i  ,\;  i = 1, \ldots ,N
\end{aligned}
\end{equation*}

\medskip

and decision is made according to
$$
 cl(\xpoint) = \textsc{sign}( \langle \LinearOperator, \xpoint \rangle + b )
$$
As Suykens et al. showed~\cite{suykens1999least} this can be further generalized for abitrary kernel induced spaces, where we classify according to:

$$
 cl(\xpoint) = \textsc{sign} \left (  \sum_{i=1}^N \LinearOperator_i \Kernel(\xpoint_i, \xpoint) + b \right )
$$
where $\LinearOperator_i$ are Lagrange multipliers associated with particular training examples $\xpoint_i$ and $b$ is a threshold, found by solving the linear system

$$
\left [
\begin{matrix}
 0 & \mathbb{1}^T \\
 \mathbb{1} & \Kernel(\Xlayer,\Xlayer) + I/C
\end{matrix}
\right ]
\left [
\begin{matrix}
b\\
\LinearOperator
\end{matrix}
\right ]
=
\left [
\begin{matrix}
 0 \\
 \Yset
\end{matrix}
\right ]
$$
where $\mathbb{1}$ is a vector of ones and $I$ is an identity matrix of appropriate dimensions.
Thus a training procedure becomes

$$
\left [
\begin{matrix}
b\\
\LinearOperator
\end{matrix}
\right ]
=
\left [
\begin{matrix}
 0 & \mathbb{1}^T \\
 \mathbb{1} & \Kernel(\Xlayer,\Xlayer) + I/C
\end{matrix}
\right ]^{-1}
\left [
\begin{matrix}
 0 \\
 \Yset
\end{matrix}
\right ].
$$
Similarly to the classical SVM, this formulation is highly unbalanced (it's results are skewed towards bigger classes). To overcome this issue one can introduce a weighted version~\cite{suykens2002weighted}, where given diagonal matrix of weights $Q$, such that $Q_{ii}$ is invertibly proportional to the size of $\xpoint_i$'s class and .

$$
\left [
\begin{matrix}
b\\
\LinearOperator
\end{matrix}
\right ]
=
\left [
\begin{matrix}
 0 & \mathbb{1}^T \\
 \mathbb{1} & \Kernel(\Xlayer,\Xlayer) + Q /C
\end{matrix}
\right ]^{-1}
\left [
\begin{matrix}
 0 \\
 \Yset
\end{matrix}
\right ].
$$

Unfortunately, due to the introduction of the square loss, the Support Vector Machines sparseness of the solution is completely lost. Resulting training has a closed form solution, but requires the computation of the whole Gram matrix and the resulting machine has to remember\footnote{there are some pruning techniques for LS-SVM but we are not investigating them here} whole training set in order to perform new point's classification.

\section{Extreme Entropy Machines}

Let us first recall the formulation of the linear classification problem in the highly dimensional feature spaces, ie. when number of samples $N$ is equal (or less) than 
dimension of the feature space $h$. In particular we formulate the problem in the limiting case\footnote{which is often obtained
by the kernel approach} when $h=\infty$:

\begin{problem}
We are given finite number of (often linearly independent)
points $\hpoint_i^\pm$ in an infinite dimensional Hilbert space $\Hspace $. Points $\hpoint^+ \in \Hlayer^+$ constitute the
positive class, while $\hpoint^- \in \Hlayer^-$ the negative class. 

We search for $\LinearOperator \in \Hspace $ such that 
the sets $\LinearOperator^T\Hlayer^+$ and $\LinearOperator^T\Hlayer^-$ are {\em optimally separated}.
\end{problem}

Observe that in itself (without additional regularization) the problem is not well-posed
as, by applying the linear independence of the data, for 
arbitrary $\meanpoint_+ \neq \meanpoint_-$ in $\R$ we can
easily construct $\LinearOperator \in \Hspace $ such that 
\begin{equation} \label{over}
\LinearOperator^T\Hlayer^+=\{\meanpoint_+\} \text{ and } \LinearOperator^T\Hlayer^-=\{\meanpoint_-\}.
\end{equation}
However, this leads to a model case of overfitting, which typically yields suboptimal results on the testing set (different from the orginal training samples).

To make the problem well-posed, we typically need to:
\begin{enumerate}
\item add/allow some error in the data,
\item specify some objective function including term penalising model's complexity.
\end{enumerate}

%

Popular choices of the objective function include per-point classification loss (like square loss in neural networks or hinge loss in SVM) with a regularization term added, often expressed as the square of the norm of our operator $\LinearOperator$ (like in SVM or in weight decay regularization of neural networks). In general one can divide objective functions derivations into following categories:
\begin{itemize}
 \item regression based (like in neural networks or ELM),
 \item probabilistic (like in the case of Naive Bayes),
 \item geometric (like in SVM),
 \item information theoretic (entropy models).
\end{itemize}
We focus on the last group of approaches, and investigate the applicability of
the {\em Cauchy-Schwarz divergence}~\cite{jenssen2006cauchy}, which for 
two densities $f$ and $g$ is given by
\begin{equation*}
 \begin{aligned}
\Dcs(f,g)&=\ln \left (\int f^2 \right )+\ln \left (\int g^2 \right )-2\ln \left (\int fg \right)\\
&=-2\ln \left (\int \tfrac{f}{\|f\|_2} \tfrac{g}{\|g\|_2} \right ).
 \end{aligned}
\end{equation*}

Cauchy-Schwarz divergence is connected to Renyi's quadratic cross entropy ($H_2^\times$~\cite{principe}) and Renyi's
quadratic entropy ($H_2$), defined for densities $f,g$ as
\begin{equation*}
 \begin{aligned}
 H_2^\times(f,g)&=-\ln \left (\int fg \right ) \\
 H_2(f)=H_2^\times(f,f)&=-\ln \left (\int f^2 \right ),
 \end{aligned}
\end{equation*}
consequently
\begin{equation*}
 \begin{aligned}
\Dcs(f,g)=2H_2^\times(f,g) -H_2(f)-H_2(g).
 \end{aligned}
\end{equation*}
and as we showed in \cite{MELC},
it is well-suited as a discrimination measure which
allows the construction of mulit-threshold linear
classifiers. In general increase of the value of Cauchy-Schwarz
Divergence results in better sets' (densities') discrimination.
Unfortunately, there are a few problems with such an approach:
\begin{itemize}
 \item true datasets are discrete, so we do not have densities $f$ and $g$,
 \item statistical density estimators require rather large sample sizes and
       are very computationally expensive.
\end{itemize}

There are basically two approaches which help us recover underlying densities from the samples. First one is performing some kind of density esimation, like the well known Kernel Density Estimation (KDE) technique, which is based on the observation that any arbitrary continuous distribution can be sufficiently approximated by the convex combination of Gaussians.
The other approach is to assume some density model (distribution's family) and fit its parameters in order to maximize the data generation probability. In statistics it is known as maximum likelihood esetimation (MLE) approach. MLE has an advantage that in general it produces much simplier densities descriptions than KDE as later's description is linearly big in terms of sample size. 

A common choice of density models are Gaussian distributions due to their nice theoretical and practical (computational) capabilities. As mentioned eariler, the conxex combination of Gaussians can approximate the given continuous distribution $f$ with arbitrary precision. In order to fit a Gaussian Mixture Model (GMM) to given dataset one needs algorithm like Expectation Maximization~\cite{dempster1977maximum} or conceptually similar Cross-Entropy Clustering~\cite{tabor2014cross}. However, for simplicity and strong regularization we propose to model $f$ as one big Gaussian $\nor(m,\Sigma)$. One of the biggest advantages of such an approach is closed form MLE parameter estimation, as we simply put $m$ equal to the empirical mean of the data, and $\Sigma$ as some data covariance estimator. Secondly, this way we introduce an error to the data which has an important regularizing role and leads to better posed optimization problem.

Let us now recall that the Shannon's differential entropy (expressed in nits) of the continuous distribution $f$ is
$$
H(f) = - \int f \ln f,
$$
we will now show that choice of Normal distributions is not arbitrary but supported by the assumptions of the entropy maximization. Following result is known, but we include the whole reasoning for completeness.

\begin{remark}
 Normal distribution $\nor(m,\Sigma)$ has a maximum Shannon's differential entropy among all real-valued distributions with mean $m \in \R^h$ and covariance $\Sigma \in \R^{h \times h}$.
\end{remark}
\begin{proof}
 Let $f$ and $g$ be arbitrary distributions with covariance $\Sigma$ and means $m$. For simplicity we assume that $m=0$ but the analogous proof holds for arbitrary mean, then
 $$
 \int f \hpoint_i \hpoint_j d \hpoint_i d \hpoint_j = \int g \hpoint_i \hpoint_j d \hpoint_i d \hpoint_j = \Sigma_{ij}, 
 $$
 so for quadratic form $A$
 $$
 \int Af = \int Ag.
 $$
 Notice that 
 \begin{equation*}
  \begin{aligned}
 \ln \nor(0,\Sigma)[\hpoint] &= \ln \left ( \frac{1}{\sqrt{(2\pi)^h \det(\Sigma)}} \exp( -\tfrac{1}{2}\hpoint^T \Sigma^{-1} \hpoint) \right )\\
 &= -\frac{1}{2} \ln( (2\pi)^h \det(\Sigma) ) -\tfrac{1}{2}\hpoint^T \Sigma^{-1} \hpoint 
  \end{aligned}
 \end{equation*}
 is a quadratic form plus constant thus $$\int f \ln \nor(0,\Sigma) = \int \nor(0,\Sigma) \ln \nor(0,\Sigma),$$which together with non-negativity of Kullback-Leibler Divergence gives
 \begin{equation*}
  \begin{aligned}
   0 &\leq \Dkl(f\;||\;\nor(0,\Sigma)) \\
   &= \int f \ln ( \tfrac{f}{\nor(0,\Sigma)} ) \\
   &= \int f \ln f - \int f \ln \nor(0,\Sigma) \\
   &= -{H}(f) - \int f \ln \nor(0,\Sigma) \\
   &= -{H}(f) - \int \nor(0,\Sigma) \ln \nor(0,\Sigma) \\
   &= -{H}(f) + {H}( \nor(0,\Sigma) ),   
  \end{aligned}
 \end{equation*}
thus
$$
{H}( \nor(0,\Sigma) ) \geq {H}(f).
$$
 
\end{proof}

There appears nontrivial question how to find/estimate
the desired Gaussian as the covariance can be singular.
In this case to regularize the covariance 
we apply the well-known Ledoit-Wolf approach \cite{ledoit2004well}.

$$
\Sigma^{\pm}=\covlw ( \Hlayer^{\pm} ) = (1-\varepsilon^\pm)\cov(\Hlayer^\pm) + \varepsilon^\pm \mathrm{tr}(\cov(\Hlayer^\pm))h^{-1} I, 
$$
where $\cov(\cdot)$ is an empirical covariance and $\varepsilon^\pm$ is a shrinkage coefficient given by Ledoit and Wolf~\cite{ledoit2004well}.

%


\def\layersep{2.5cm}

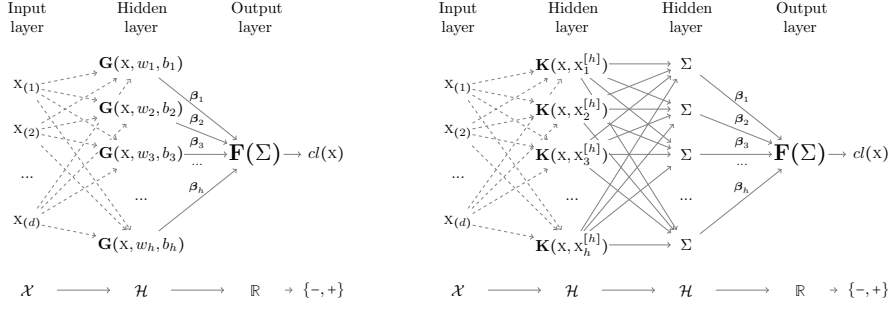
\begin{figure*}[h]
 \centering

\begin{minipage}[b]{0.2\linewidth}
\scalebox{.6}{
\begin{tikzpicture}[shorten >=1pt,->,draw=black!50, node distance=\layersep]
    \tikzstyle{every pin edge}=[<-,shorten <=1pt]
    \tikzstyle{neuron}=[fill=white, minimum size=17pt,inner sep=0pt]
    \tikzstyle{input neuron}=[neuron];
    \tikzstyle{output neuron}=[neuron];
    \tikzstyle{hidden neuron}=[neuron];
    \tikzstyle{annot} = [text width=3em, text centered]


	\node[input neuron] (I-1) at (0,-1) {$\xpoint_{(1)}$};
        \node[input neuron] (I-2) at (0,-2) {$\xpoint_{(2)}$};
        \node[input neuron] (I-3) at (0,-3) {$...$};
        \node[input neuron] (I-4) at (0,-4) {$\xpoint_{(d)}$};

        
        \path[yshift=0.5cm]
            node[hidden neuron] (H-1) at (\layersep,-1 cm) { $\Gfunction(\xpoint,w_1,b_1)$};
        \path[yshift=0.5cm]
            node[hidden neuron] (H-2) at (\layersep,-2 cm) { $\Gfunction(\xpoint,w_2,b_2)$};
        \path[yshift=0.5cm]
            node[hidden neuron] (H-3) at (\layersep,-3 cm) { $\Gfunction(\xpoint,w_3,b_3)$};
        \path[yshift=0.5cm]
            node[hidden neuron] (H-4) at (\layersep,-4 cm) { $...$};
        \path[yshift=0.5cm]
            node[hidden neuron] (H-5) at (\layersep,-5 cm) { $\Gfunction(\xpoint,w_h,b_h)$};

    \node[output neuron,pin={[pin edge={->}]right:$cl(\xpoint)$}, right of=H-3] (O) {\Large $\Ffunction(\Sigma)$};

    \foreach \source in {1,2,4}
        \foreach \dest in {1,2,3,5}
            \path[dash pattern=on 2pt off 2pt] (I-\source) edge (H-\dest) ;

        \path[yshift=0.5cm]
            node[hidden neuron] (H-1) at (\layersep,-1 cm) { $\Gfunction(\xpoint,w_1,b_1)$};
        \path[yshift=0.5cm]
            node[hidden neuron] (H-2) at (\layersep,-2 cm) { $\Gfunction(\xpoint,w_2,b_2)$};
        \path[yshift=0.5cm]
            node[hidden neuron] (H-3) at (\layersep,-3 cm) { $\Gfunction(\xpoint,w_3,b_3)$};
        \path[yshift=0.5cm]
            node[hidden neuron] (H-4) at (\layersep,-4 cm) { $...$};
        \path[yshift=0.5cm]
            node[hidden neuron] (H-5) at (\layersep,-5 cm) { $\Gfunction(\xpoint,w_h,b_h)$};

        \path[yshift=0.5cm]
            node[hidden neuron] (b-1) at (\layersep + 35,-35 + -1 * 0.5 cm) {\scriptsize $\LinearOperator_1$};
        \path[yshift=0.5cm]
            node[hidden neuron] (b-2) at (\layersep + 35,-35 + -2 * 0.5 cm) {\scriptsize $\LinearOperator_2$};
        \path[yshift=0.5cm]
            node[hidden neuron] (b-3) at (\layersep + 35,-35 + -3 * 0.5 cm) {\scriptsize $\LinearOperator_3$};
        \path[yshift=0.5cm]
            node[hidden neuron] (b-4) at (\layersep + 35,-35 + -4 * 0.5 cm) {\scriptsize $...$};
        \path[yshift=0.5cm]
            node[hidden neuron] (b-5) at (\layersep + 35,-35 + -5 * 0.5 cm) {\scriptsize $\LinearOperator_h$};

    \foreach \source in {1,2,3,5}
        \path (H-\source) edge (O);


    \node[annot,above of=H-1, node distance=1cm] (hl) {Hidden layer};
    \node[annot,left of=hl] {Input layer};
    \node[annot,right of=hl] {Output layer};
    
    \node[annot,below of=H-5, node distance=1cm] (xhl) {$\Hspace$};
    \node[annot,left  of=xhl] (xi)  {$\Xspace$};
    \node[annot,right of=xhl] (xo) {$\R$};
    \node[annot,right of=xo, node distance=1.5cm] (xc) {$\{-,+\}$};
    \path (xi) edge (xhl);
    \path (xhl) edge (xo);
    \path (xo) edge (xc);
\end{tikzpicture}
}
\end{minipage}
\hspace{3.01cm}
\begin{minipage}[b]{0.5\linewidth}
\scalebox{.6}{
\begin{tikzpicture}[shorten >=1pt,->,draw=black!50, node distance=\layersep]
    \tikzstyle{every pin edge}=[<-,shorten <=1pt]
    \tikzstyle{neuron}=[fill=white, minimum size=17pt,inner sep=0pt]
    \tikzstyle{input neuron}=[neuron];
    \tikzstyle{output neuron}=[neuron];
    \tikzstyle{hidden neuron}=[neuron];
    \tikzstyle{annot} = [text width=3em, text centered]


	\node[input neuron] (I-1) at (0,-1) {$\xpoint_{(1)}$};
        \node[input neuron] (I-2) at (0,-2) {$\xpoint_{(2)}$};
        \node[input neuron] (I-3) at (0,-3) {$...$};
        \node[input neuron] (I-4) at (0,-4) {$\xpoint_{(d)}$};

        
        \path[yshift=0.5cm]
            node[hidden neuron] (H-1) at (\layersep,-1 cm) { $\Kernel(\xpoint,\xpoint\subsample_1)$};
        \path[yshift=0.5cm]
            node[hidden neuron] (H-2) at (\layersep,-2 cm) { $\Kernel(\xpoint,\xpoint\subsample_2)$};
        \path[yshift=0.5cm]
            node[hidden neuron] (H-3) at (\layersep,-3 cm) { $\Kernel(\xpoint,\xpoint\subsample_3)$};
        \path[yshift=0.5cm]
            node[hidden neuron] (H-4) at (\layersep,-4 cm) { $...$};
        \path[yshift=0.5cm]
            node[hidden neuron] (H-5) at (\layersep,-5 cm) { $\Kernel(\xpoint,\xpoint\subsample_h)$};

	\path[yshift=0.5cm]
            node[hidden neuron] (H2-1) at (2*\layersep,-1 cm) { $\Sigma$};
        \path[yshift=0.5cm]
            node[hidden neuron] (H2-2) at (2*\layersep,-2 cm) { $\Sigma$};
        \path[yshift=0.5cm]
            node[hidden neuron] (H2-3) at (2*\layersep,-3 cm) { $\Sigma$};
        \path[yshift=0.5cm]
            node[hidden neuron] (H2-4) at (2*\layersep,-4 cm) { $...$};
        \path[yshift=0.5cm]
            node[hidden neuron] (H2-5) at (2*\layersep,-5 cm) { $\Sigma$};

    \node[output neuron,pin={[pin edge={->}]right:$cl(\xpoint)$}, right of=H2-3] (O) {\Large $\Ffunction(\Sigma)$};

    \foreach \source in {1,2,4}
        \foreach \dest in {1,2,3,5}
            \path[dash pattern=on 2pt off 2pt] (I-\source) edge (H-\dest) ;
    \foreach \source in {1,2,3,5}
        \foreach \dest in {1,2,3,5}
            \path (H-\source) edge (H2-\dest) ;

        \path[yshift=0.5cm]
            node[hidden neuron] (H-1) at (\layersep,-1 cm) { $\Kernel(\xpoint,\xpoint\subsample_1)$};
        \path[yshift=0.5cm]
            node[hidden neuron] (H-2) at (\layersep,-2 cm) { $\Kernel(\xpoint,\xpoint\subsample_2)$};
        \path[yshift=0.5cm]
            node[hidden neuron] (H-3) at (\layersep,-3 cm) { $\Kernel(\xpoint,\xpoint\subsample_3)$};
        \path[yshift=0.5cm]
            node[hidden neuron] (H-4) at (\layersep,-4 cm) { $...$};
        \path[yshift=0.5cm]
            node[hidden neuron] (H-5) at (\layersep,-5 cm) { $\Kernel(\xpoint,\xpoint\subsample_h)$};

	\path[yshift=0.5cm]
            node[hidden neuron] (H2-1) at (2*\layersep,-1 cm) { $\Sigma$};
        \path[yshift=0.5cm]
            node[hidden neuron] (H2-2) at (2*\layersep,-2 cm) { $\Sigma$};
        \path[yshift=0.5cm]
            node[hidden neuron] (H2-3) at (2*\layersep,-3 cm) { $\Sigma$};
        \path[yshift=0.5cm]
            node[hidden neuron] (H2-4) at (2*\layersep,-4 cm) { $...$};
        \path[yshift=0.5cm]
            node[hidden neuron] (H2-5) at (2*\layersep,-5 cm) { $\Sigma$};

        \path[yshift=0.5cm]
            node[hidden neuron] (b-1) at (2*\layersep + 35,-35 + -1 * 0.5 cm) {\scriptsize $\LinearOperator_1$};
        \path[yshift=0.5cm]
            node[hidden neuron] (b-2) at (2*\layersep + 35,-35 + -2 * 0.5 cm) {\scriptsize $\LinearOperator_2$};
        \path[yshift=0.5cm]
            node[hidden neuron] (b-3) at (2*\layersep + 35,-35 + -3 * 0.5 cm) {\scriptsize $\LinearOperator_3$};
        \path[yshift=0.5cm]
            node[hidden neuron] (b-4) at (2*\layersep + 35,-35 + -4 * 0.5 cm) {\scriptsize $...$};
        \path[yshift=0.5cm]
            node[hidden neuron] (b-5) at (2*\layersep + 35,-35 + -5 * 0.5 cm) {\scriptsize $\LinearOperator_h$};

    \foreach \source in {1,2,3,5}
        \path (H2-\source) edge (O);


    \node[annot,above of=H-1, node distance=1cm] (hl) {Hidden layer};
    \node[annot,above of=H2-1, node distance=1cm] (hl2) {Hidden layer};
    \node[annot,left of=hl] {Input layer};
    \node[annot,right of=hl2] {Output layer};
    
    \node[annot,below of=H-5, node distance=1cm] (xhl) {$\Hspace$};
    \node[annot,below of=H2-5, node distance=1cm] (xhl2) {$\Hspace$};
    \node[annot,left  of=xhl] (xi)  {$\Xspace$};
    \node[annot,right of=xhl2] (xo) {$\R$};
    \node[annot,right of=xo, node distance=1.5cm] (xc) {$\{-,+\}$};
    \path (xi) edge (xhl);
    \path (xhl) edge (xhl2);
    \path (xhl2) edge (xo);
    \path (xo) edge (xc);
\end{tikzpicture}
}
\end{minipage}

\caption{Extreme Entropy Machine (on the left) and Extreme Entropy Kernel Machine (on the right) as neural networks. In both cases all weights are either randomly selected (dashed) or are the result of closed-form optimization (solid). }
\label{fig:nn}
\end{figure*}

Thus, our optimization problem can be stated as
follows:

\begin{problem}[Optimization problem]
Suppose that we are given two datasets
$\Hlayer^\pm$ in a Hilbert space $\Hspace $ which come from the Gaussian distributions $\nor(\meanpoint^\pm,\Sigma^\pm)$. 
Find $\LinearOperator \in \Hspace $ such that the datasets
$$
\LinearOperator^T \Hlayer^+ \mbox{ and } \LinearOperator^T \Hlayer^-
$$
are optimally discriminated in terms of Cauchy-Schwarz Divergence.
\end{problem}


Because $\Hlayer^\pm$ has density $\nor(\meanpoint^\pm,\Sigma^\pm)$,  $\LinearOperator^T\Xlayer^\pm$ has the density $\nor(\LinearOperator^T\meanpoint^\pm,\LinearOperator^T\Sigma^\pm \LinearOperator)$.
We put 
\begin{equation} \label{eq:S}
\meanpoint_\pm=\LinearOperator^T\meanpoint^\pm, \, S_\pm=\LinearOperator^T\Sigma^\pm \LinearOperator.
\end{equation}
Since, as one can easily compute~\cite{ckrbf},
$$
\int \frac{\nor(\meanpoint_+,S_+)}{\|\nor(\meanpoint_+,S_+)\|_2} \cdot \frac{\nor(\meanpoint_-,S_-)}{\|\nor(\meanpoint_-,S_-)\|_2}
$$
$$ 
=\frac{\nor(\meanpoint_+-\meanpoint_-,S_++S_-)[0]}{(\nor(0,2S_+)[0]\nor(0,2S_-)[0])^{1/2}} 
$$
$$
=\frac{(2\pi S_+ S_-)^{1/4}}{(S_++S_-)^{1/2}}\exp \left (-\frac{(\meanpoint_+-\meanpoint_-)^2}{2(S_++S_-)} \right ),
$$
we obtain that
\begin{equation}
\begin{aligned}
\Dcs&(\nor(\meanpoint_+,S_+),\nor(\meanpoint_-,S_-))\\
&=-\ln \left ( \int \frac{\nor(\meanpoint_+,S_+)}{\|\nor(\meanpoint_+,S_+)\|_2} \cdot \frac{\nor(\meanpoint_-,S_-)}{\|\nor(\meanpoint_-,S_-)\|_2} \right )  \\
&=-\frac{1}{2}\ln \frac{\pi}{2}-\ln \frac{\tfrac{1}{2}(S_++S_-)}{\sqrt{S_+S_-}}+\frac{(\meanpoint_+-\meanpoint_-)^2}{S_++S_-}.
\end{aligned}
\end{equation}
Observe that in the above equation the first term is constant, the second is the logarithm of the quotient of arithmetical and geometrical means (and therefore in the typical cases is bounded and close to zero). Consequently, crucial information is given by the last
term. In order to confirm this claim we perform experiments on over 20 datasets used in further evaluation (more details are located in the Evaluation section). We compute the Spearman's rank correlation coefficient between the $\Dcs(\nor(\meanpoint_+,S_+),\nor(\meanpoint_-,S_-))$ and $\frac{(\meanpoint_+-\meanpoint_-)^2}{S_++S_-}$ for hundread random projections to $\Hspace $ and hundread random linear operators $\LinearOperator$.
\begin{table}[H]
\centering
  \caption{Spearman's rank correlation coefficient between optimized term and whole $\Dcs$ for all datasets used in evaluation. Each column represents different dimension of the Hilbert space.}
  \label{tab:dcs}
 \begin{tabular}{lccccc}
 \toprule
 dataset & $1$ & $10$ & $100$ & $200$ & $500$ \\
 \midrule
 \textsc{ australian } & 0.928
&
-0.022
&
0.295
&
0.161
&
0.235
\\
\textsc{ breast-cancer } & 0.628
&
0.809
&
0.812
&
0.858
&
0.788
\\
\textsc{ diabetes } & -0.983
&
-0.976
&
-0.941
&
-0.982
&
-0.952
\\
\textsc{ german.numer } & 0.916
&
0.979
&
0.877
&
0.873
&
0.839
\\
\textsc{ heart } & 0.964
&
0.829
&
0.931
&
0.91
&
0.893
\\
\textsc{ ionosphere } & 0.999
&
0.988
&
0.98
&
0.978
&
0.984
\\
\textsc{ liver-disorders } & 0.232
&
0.308
&
0.363
&
0.33
&
0.312
\\
\textsc{ sonar } & -0.31
&
-0.542
&
-0.41
&
-0.407
&
-0.381
\\
\textsc{ splice } & -0.284
&
-0.036
&
-0.165
&
-0.118
&
-0.101
\\
\midrule
\textsc{ abalone7 } & 1.0
&
0.999
&
0.999
&
0.999
&
0.998
\\
\textsc{ arythmia } & 1.0
&
1.0
&
0.999
&
1.0
&
1.0
\\
\textsc{ balance } & 1.0
&
0.998
&
0.998
&
0.999
&
0.998
\\
\textsc{ car evaluation } & 1.0
&
0.998
&
0.998
&
0.997
&
0.997
\\
\textsc{ ecoli } & 0.964
&
0.994
&
0.995
&
0.998
&
0.995
\\
\textsc{ libras move } & 1.0
&
0.999
&
0.999
&
1.0
&
1.0
\\
\textsc{ oil spill } & 1.0
&
1.0
&
1.0
&
1.0
&
1.0
\\
\textsc{ sick euthyroid } & 1.0
&
0.999
&
1.0
&
1.0
&
1.0
\\
\textsc{ solar flare } & 1.0
&
1.0
&
1.0
&
1.0
&
1.0
\\
\textsc{ spectrometer } & 1.0
&
1.0
&
0.999
&
0.999
&
0.999
\\
\midrule
\textsc{ forest cover } & 0.988
&
0.997
&
0.997
&
0.992
&
0.988
\\
\textsc{ isolet } & 0.784
&
1.0
&
0.997
&
0.997
&
0.999
\\
\textsc{ mammography } & 1.0
&
1.0
&
1.0
&
1.0
&
1.0
\\
\textsc{ protein homology } & 1.0
&
1.0
&
1.0
&
1.0
&
1.0
\\
\textsc{ webpages } & 1.0
&
1.0
&
1.0
&
0.999
&
0.999
\\
\bottomrule

 \end{tabular}

\end{table}
As one can see in Table~\ref{tab:dcs}, in small datasets (first part of the table) the correlation is generally high, with some exceptions (like \textsc{sonar}, \textsc{splice}, \textsc{liver-disorders} and \textsc{diabetes}). However, for bigger datasets (consisting of thousands examples) this correlation is nearly perfect (up to the randomization process it is nearly $1.0$ for all cases) which is a very strong empirical confirmation of our claim that maximization of the $\frac{(\meanpoint_+-\meanpoint_-)^2}{S_++S_-}$ is generally equivalent to the maximization of $\Dcs(\nor(\meanpoint_+,S_+),\nor(\meanpoint_-,S_-))$.

This means that, after the above reductions, 
and application of \eqref{eq:S}
our final problem can be stated as follows:

%

\medskip

\noindent{\textbf{Optimization problem: Extreme Entropy Machine} }
\begin{equation*}
\begin{aligned}
& \underset{\LinearOperator}{\text{minimize}} 
& & \LinearOperator^T \Sigma^+ \LinearOperator + \LinearOperator^T \Sigma^- \LinearOperator &\\
& \text{subject to}
& & \LinearOperator^T( \meanpoint^+ - \meanpoint^- ) = 2\\
& \text{where}
& & \Sigma^\pm = \covlw (\Hlayer^\pm) \\
& & & \meanpoint^\pm = \frac{1}{|\Hlayer^\pm|} \sum_{\hpoint^\pm \in \Hlayer^\pm} \hpoint^\pm\\
& & & \Hlayer^\pm = \varphi(\Xlayer^\pm)
\end{aligned}
\end{equation*}

\medskip

Before we continue to the closed-form solution we outline two methods of actually transforming our data $\Xlayer^\pm \subset \Xspace $ to the 
highly dimensional $\Hlayer^\pm \subset \Hspace $, given by the $\varphi : \Xspace  \to \Hspace $.

We investigate two approaches which lead to the Extreme Entropy Machine and Extreme Entropy Kernel Machine respectively.

\begin{itemize}
 \item for \textbf{Extreme Entropy Machine} (EEM) we use the random projection technique, exactly the same as the one used in the ELM. In other words, given some generalized activation function $\Gfunction(\xpoint,w,b) : \Xspace  \times \Xspace  \times \mathbb{R} \to \R$ and a constant $h$ denoting number of hidden neurons: 
$$
\varphi : \Xspace \ni \xpoint \to [\Gfunction(\xpoint,w_1,b_1),\dots,\Gfunction(\xpoint,w_h,b_h)]^T \in \R^h  
$$
where $w_i$ are random vectors and $b_i$ are random biases.
\item for \textbf{Extreme Entropy Kernel Machine} (EEKM) we use the randomized kernel approximation technique~\cite{drineas2005nystrom}, which spans our Hilbert space on randomly selecteed subset of training vectors. In other words, given valid kernel $\Kernel(\cdot,\cdot) : \Xspace  \times \Xspace  \to \R_+$ and size of the kernel space base $h$:
$$
\varphi_\Kernel : \Xspace \ni \xpoint \to (\Kernel(\xpoint,\Xlayer\subsample)\Kernel(\Xlayer\subsample,\Xlayer\subsample)^{-1/2})^T \in \R^h  
$$
where $\Xlayer\subsample$ is a $h$ element random subset of $\Xlayer$. It is easy to verify that such low rank approxmation truly behaves as a kernel, in the sense that for $\varphi_\Kernel(\xpoint_i), \varphi_\Kernel(\xpoint_j) \in \R^{h} $
\begin{equation*}
 \begin{aligned}
  \varphi_\Kernel(\xpoint_i)^T&\varphi_\Kernel(\xpoint_j) = \\
  =  &((\Kernel(\xpoint_i,\Xlayer\subsample)\Kernel(\Xlayer\subsample,\Xlayer\subsample)^{-1/2})^T)^T \\
    &( \Kernel(y,\Xlayer\subsample)\Kernel(\Xlayer\subsample,\Xlayer\subsample)^{-1/2} )^T \\
  =  &\Kernel(\xpoint_i,\Xlayer\subsample)\Kernel(\Xlayer\subsample,\Xlayer\subsample)^{-1/2} \\
    &( \Kernel(y,\Xlayer\subsample)\Kernel(\Xlayer\subsample,\Xlayer\subsample)^{-1/2} )^T \\
  = &\Kernel(\xpoint_i,\Xlayer\subsample)\Kernel(\Xlayer\subsample,\Xlayer\subsample)^{-1/2}\\
    &\Kernel(\Xlayer\subsample,\Xlayer\subsample)^{-1/2} \Kernel^T(\xpoint_j,\Xlayer\subsample) \\
  = &\Kernel(\xpoint_i,\Xlayer\subsample)\Kernel(\Xlayer\subsample,\Xlayer\subsample)^{-1} \Kernel(\Xlayer\subsample,\xpoint_j), 
 \end{aligned}
\end{equation*}
\noindent given true kernel projection $\phi_\Kernel$ such that $$\Kernel(\xpoint_i,\xpoint_j)=\phi_\Kernel(\xpoint_i)^T\phi_\Kernel(\xpoint_j)$$ we have  
\begin{equation*}
 \begin{aligned}
  \Kernel(\xpoint_i,\Xlayer\subsample)&\Kernel(\Xlayer\subsample,\Xlayer\subsample)^{-1} \Kernel(\Xlayer\subsample,\xpoint_j) = \\
  =&\phi_\Kernel(\xpoint_i)^T\phi_\Kernel(\Xlayer\subsample) \\
   &(\phi_\Kernel(\Xlayer\subsample)^T\phi_\Kernel(\Xlayer\subsample))^{-1}\\
   &\phi_\Kernel(\Xlayer\subsample)^T\phi_\Kernel(\xpoint_j) \\
  =&\phi_\Kernel(\xpoint_i)^T\phi_\Kernel(\Xlayer\subsample) \phi_\Kernel(\Xlayer\subsample)^{-1}\\
   &(\phi_\Kernel(\Xlayer\subsample)^T)^{-1} \phi_\Kernel(\Xlayer\subsample)^T\phi_\Kernel(\xpoint_j) \\
  =& \phi_\Kernel(\xpoint_i)^T\phi_\Kernel(\xpoint_j)\\
  =& \Kernel(\xpoint_i,\xpoint_j).
 \end{aligned}
\end{equation*}
Thus for the whole samples' set we have
$$
\varphi_\Kernel(\Xlayer)^T \varphi_\Kernel(\Xlayer) = \Kernel(\Xlayer,\Xlayer),
$$
which is a complete Gram matrix.
\end{itemize}

So the only difference between Extreme Entropy Machine and Extreme Entropy Kernel Machine is that in later we use $\Hlayer^\pm=\varphi_\Kernel(\Xlayer^\pm)$ where $\Kernel$ is a selected kernel instead of $\Hlayer^\pm=\varphi(\Xlayer^\pm)$. Fig.~\ref{fig:nn} visualizes these two approaches as neural networks, in particular EEM is a simple SLFN, while EEKM leads to the network with two hidden layers.

\begin{figure*}[h]
 \resizebox{\linewidth}{!}{
 \centering
\begin{tikzpicture}[      
        every node/.style={anchor=south west,inner sep=0pt},
        x=1mm, y=1mm,
      ]  
    \node (f2) at (38,-4) {\includegraphics[width=5cm]{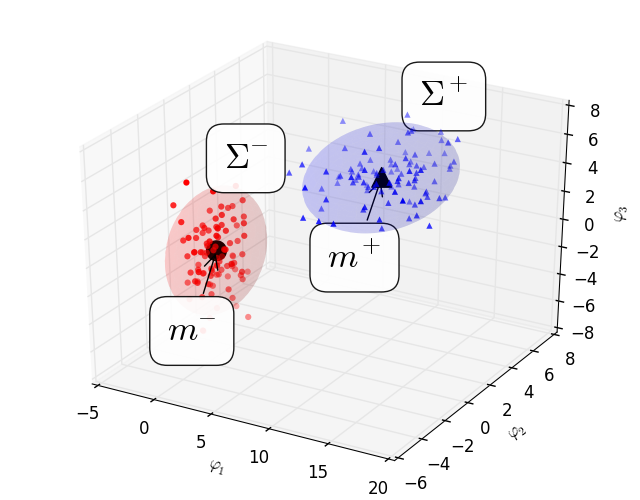}};
    \node (f1) at (0,0) {\includegraphics[width=4cm]{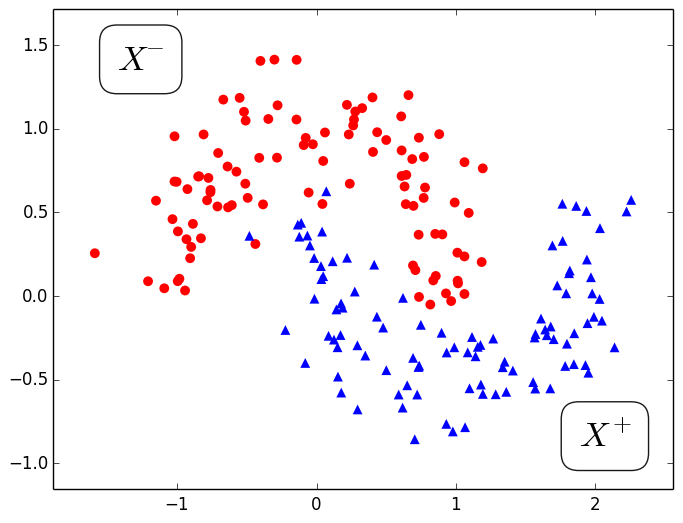}};
    \node (f3) at (90,0) {\includegraphics[width=4cm]{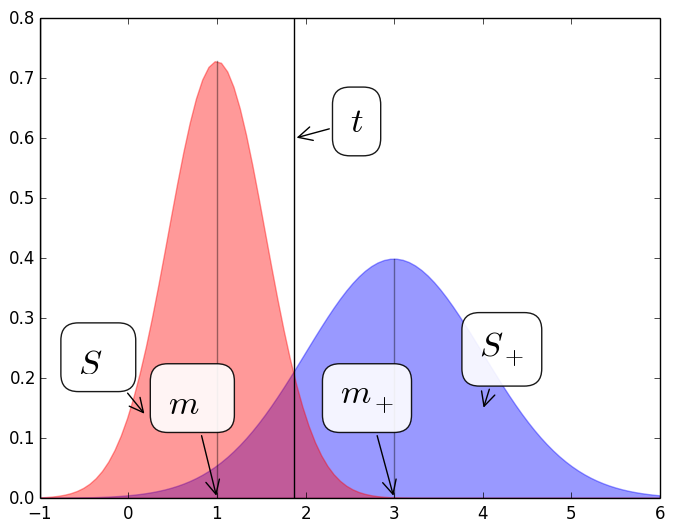}};
    \node (f4) at (135,0) {\includegraphics[width=4cm]{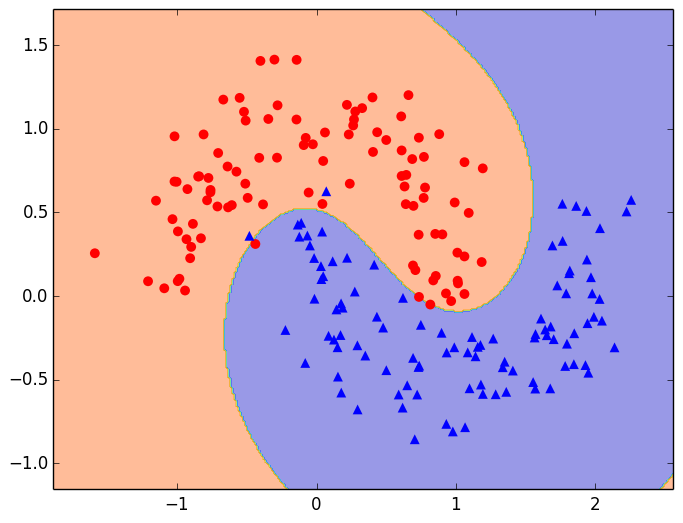}};

    \draw[thick,->] (36,20) -- (50,20) node[anchor=north] {};
    \node (p) at (41,23) {$\varphi$};

    \draw[thick,->] (78,20) -- (96,20) node[anchor=north] {};
    \node (p) at (86,23) {$\LinearOperator$};
    
    \draw[thick,->] (125,20) -- (142,20) node[anchor=north] {};
    \node (p) at (132,23) {$cl$};
    
\end{tikzpicture}
}

\caption{Visualization of the whole EEM classification process. From the left: Linearly non separable data in $\Xspace$; data mapped to the $\Hspace$ space, where we find covariance estimators; density of projected Gaussians on which the decision is based; decision boundary in the input space $\Xspace$.  }
\label{fig:classificaion}

\end{figure*}

\begin{remark}
Extreme Entropy Machine optimization problem is closely related to the SVM optimization, but instead of maximizing the margin between closest points we are maximizing the mean margin.
\end{remark}
\begin{proof}
Let us recall that in SVM we try to maximize the margin $\frac{2}{\| \LinearOperator \|}$ under constraints that negative samples are projected at values at most -1 ($\LinearOperator^T \hpoint^- + b\leq -1 $) and positive samples on at least 1 ($\LinearOperator^T \hpoint^+ + b\geq 1$)
In other words, we are minimizing the $\LinearOperator$ operator norm $$\| \LinearOperator \|$$which is equivalent to minimizing the square of this norm $\| \LinearOperator\|^2$, under constraint that $$\min_{\hpoint^+ \in \Hlayer^+} \{ \LinearOperator^T \hpoint^+  \} -  \max_{\hpoint^- \in \Hlayer^-} \{ \LinearOperator^T \hpoint^-  \} = 1 - (-1) = 2.$$
On the other hand, EEM tries to minimize
\begin{equation*}
 \begin{aligned}
\LinearOperator^T \Sigma^+ \LinearOperator + \LinearOperator^T \Sigma^- \LinearOperator &= \LinearOperator^T ( \Sigma^+ + \Sigma^- ) \LinearOperator \\
&= \| \LinearOperator \|_{\Sigma^+ + \Sigma^-}^2
 \end{aligned}
\end{equation*}
under the constraint that
$$
\tfrac{1}{|\Hlayer^+|}\sum_{\hpoint^+ \in \Hlayer^+}  \LinearOperator^T \hpoint^+  -  \tfrac{1}{|\Hlayer^-|} \sum_{\hpoint^- \in \Hlayer^-}  \LinearOperator^T \hpoint^-   = 2.
$$
So what is happening here is that we are trying to maximize the mean margin between classes in the Mahalanobis norm 
generated by the sum of classes' covariances. It was previously shown in Two ellipsoid Support Vector Machines model~\cite{czarnecki2014two} that such norm is an approximation of the margin coming from two ellpisoids instead of the single ball used by traditional SVM.
\end{proof}

Similar observation regarding connection between large margin classification and entropy optimization has been done in case of the Multithreshold Linear Entropy Classifier~\cite{MELC}.


We are going to show by applying the standard
method of Lagrange multipliers that the above
problem has a closed form solution (similar to the Fischer's Discriminant).
Let
$$
\Sigma=\Sigma^++\Sigma^- \mbox{ and }\meanpoint=\meanpoint^+-\meanpoint^-.
$$
We put 
$$
L(\LinearOperator,\lambda):=2\LinearOperator^T\Sigma \LinearOperator-\lambda(\LinearOperator^T\meanpoint-2).
$$
Then
$$
\nabla_v L=2\Sigma \LinearOperator-\lambda \meanpoint
\text{ and } \frac{\partial}{\partial \lambda}L=\LinearOperator^T\meanpoint-2,
$$
which means that we need to solve, with respect to $\LinearOperator$, the system
$$
\begin{cases}
2\Sigma \LinearOperator-\lambda \meanpoint=0, \\
\LinearOperator^Tm=2.
\end{cases}
$$
Therefore $\LinearOperator=\frac{\lambda}{2}\Sigma^{-1}\meanpoint$, which
yields 
$$
\tfrac{\lambda}{2}\meanpoint^T\Sigma^{-1}\meanpoint=2,
$$
and consequently\footnote{where $\|\meanpoint\|^2_\Sigma=\meanpoint^T\Sigma^{-1}\meanpoint$ denotes
the squared Mahalanobis norm of $\meanpoint$.}, if $\meanpoint\neq 0$, then
$\lambda=4/\|\meanpoint\|^2_\Sigma$ and 
\begin{equation} \label{eq:final}
\begin{aligned}
 \LinearOperator&=\tfrac{2}{\|\meanpoint\|^2_\Sigma}\Sigma^{-1}\meanpoint\\
		&=\frac{2(\Sigma^+ + \Sigma^-)^{-1}(\meanpoint^+-\meanpoint^-)}{\|\meanpoint^+-\meanpoint^-\|^2_{\Sigma^+ + \Sigma^-}}.
\end{aligned}
\end{equation}

The final decision of the class of the
point $\hpoint$ is therefore given by 
the comparison of the values
$$
\nor(\LinearOperator^T\meanpoint^+,\LinearOperator^T\Sigma^+\LinearOperator)[\LinearOperator^T\hpoint] \text{ and }
\nor(\LinearOperator^T\meanpoint^-,\LinearOperator^T\Sigma^-\LinearOperator)[\LinearOperator^T\hpoint].
$$


We distinguish two cases based on number of resulting classifier's thresholds (points $t$ such that $\nor(\LinearOperator^T\meanpoint^+,\LinearOperator^T\Sigma^+\LinearOperator)[t] = \nor(\LinearOperator^T\meanpoint^-,\LinearOperator^T\Sigma^-\LinearOperator)[t]$):

\begin{enumerate}
 \item $S_- = S_+$, then there is one threshold  
\begin{flalign*}  
&t_0=\meanpoint_- + 1, &
\end{flalign*}
which results in a traditional (one-threshold) linear classifier,
 \item $S_- \neq S_+$, then there are two thresholds 
\begin{flalign*}  
&t_\pm = \meanpoint_- + \tfrac{2S_- \pm \sqrt{S_-S_+(\ln(S_-/S_+)(S_--S_+)+4)}}{S_--S_+}, &
\end{flalign*}
which makes the resulting classifier a member of two-thresholds linear classifiers family~\cite{anthony2003learning}.

\end{enumerate}
Obviously, in the degenerated case, when $m=0 \iff \meanpoint^-=\meanpoint^+$ there is no solution, as the constraint $\LinearOperator^T(\meanpoint^--\meanpoint^+)=2$ is not fulfilled for any $\LinearOperator$. In such a case EEM returna a trivial classifier constantly equal to any class (we put $\LinearOperator=0$).



From the neural network perspetive we simply construct a custom activation function $\Ffunction(\cdot)$ in the output neuron depending on one of the two described cases:
\begin{enumerate}
 \item $\Ffunction(x) = \left \{ \begin{matrix}
                                  +1, \text{ if } x \geq t_0\\
                                  -1, \text{ if } x < t_0
                                 \end{matrix}
 \right . = \textsc{sign}(x-t_0),$
 \item $\Ffunction(x) = \left \{ \begin{matrix}
                                  +1, \text{ if } x \in [t_-,t_+]\\
                                  -1, \text{ if } x \notin [t_-,t_+]
                                 \end{matrix}
 \right . = -\textsc{sign}(x-t_-) \textsc{sign}(x-t_+) ,$
 
 if $t_-<t_+$ and \\
 $\Ffunction(x) = \left \{ \begin{matrix}
                                  -1, \text{ if } x \in [t_+,t_-]\\
                                  +1, \text{ if } x \notin [t_+,t_-]
                                 \end{matrix}
 \right .  = \textsc{sign}(x-t_-) \textsc{sign}(x-t_+) ,$\\
  otherwise.
\end{enumerate}
The whole classification process is visualized in Fig.~\ref{fig:classificaion}, we begin with data in the input space $\Xspace $, transform it into Hilbert space $\Hspace $ where we model them as Gaussians, then perform optimization leading to the projection on $\R$ through $\LinearOperator$ and perform densitiy based classification leading to non-linear decision boundary in $\Xspace $.
%

\section{Theory: density estimation in the kernel case}

To illustrate our reasoning, we consider a typical basic problem
concerning the density estimation.

\begin{problem}
Assume that we are given a finite data set $\Hlayer$ in a Hilbert space
$\Hspace $ generated by the unknown density $f$, and we want
to obtain estimate of $f$.
\end{problem}

Since the problem in itself is infinite dimensional typically the data would be linearly independent. Moreover, one usually can not obtain reliable density estimation - the most
we can hope is that after transformation by a linear functional
into $\R$, the resulting density will be well-estimated.

To simplify the problem assume therefore that we want
to find the desired density in the class of normal densities --
or equivalently that we are interested only in the estimation 
of the mean and covariance of $f$.

The generalization of the above problem is given by the following
problem:

\begin{problem}
Assume that we are given a finite data sets $\Hlayer^\pm$ in a Hilbert space
$\Hspace $ generated by the unknown densities $f^\pm$, and we want
to obtain estimate of the unknown densities.
\end{problem}

In general $\text{dim}(\Hspace ) = h \gg N$ which means that we have very sparse data in terms of Hilbert space. As a result, classical kernel density estimation (KDE) is not reliable source of information~\cite{parzen1962estimation}. In the absence of different tools we can however use KDE with very big kernel width in order to cover at least some general shape of the whole density.

\begin{remark}
 Assume that we are given a finite data sets $\Hlayer^\pm$ with means $\meanpoint^\pm$ and covariances $\Sigma^\pm$ in a Hilbert space
 $\Hspace $. If we conduct kernel density estimation using Gaussian
 kernel then, in a limiting case, 
 each class becomes a Normal distribution.
 $$
 \lim_{\sigma \to \infty} \| \de{ \Hlayer^\pm }_\sigma - \nor( \meanpoint^\pm, \sigma^2 \Sigma^\pm ) \|_2 = 0,
 $$
 where
 $$
 \de{ A }_\sigma = \tfrac{1}{|A|} \sum_{a \in A}\nor(a, \sigma^2 \cdot \cov(A))
 $$
\end{remark}

Proof of this remark is given by Czarnecki and Tabor~\cite{MELC} and means that if we perform a Gaussian kernel density estimation of our data with big kernel width (which is reasonable for small amount of data in highly dimensional space) then for big enough $\hat \sigma$ EEM is nearly optimal linear classifier in terms of estimated densities $$ \hat f^\pm = \nor( \meanpoint^\pm, \hat \sigma^2 \Sigma^\pm ) \approx \de{ \Hlayer^\pm }_{\hat \sigma}. $$

\begin{sidewaystable*}[ph!]
\centering

 \caption{comparison of considered classiifers. $|SV|$ denotes number of support vectors. Asterix denotes features which can be adde to a particular model by some minor modifications, but we compare here the base versiond of each model.}
 \label{tab:comparison}
 \begin{tabular}{ccccc}
 \hline
	     & ELM & SVM & LS-SVM &  EE(K)M \\
 \hline
optimization method  & Linear regression 		  & Quadratic Programming & Linear System  &  \cellcolor{gray!10}Fischer's Discriminant \\
nonlinearity & random projection & kernel & kernel &  \cellcolor{gray!10}random (kernel) projection\\
closed form  & yes & no & yes &  \cellcolor{gray!10}yes \\
balanced     & no* & no* & no* &  \cellcolor{gray!10}yes \\
regression   & yes & no* & yes &  \cellcolor{gray!10}no \\
criterion    & Mean Squared Error & Hinge loss  & Mean Squared Error & \cellcolor{gray!10} Entropy optimization \\
no. of thresholds & 1 & 1 & 1 &  \cellcolor{gray!10}1 or 2 \\ 
problem type & regression & classification & regression & \cellcolor{gray!10} classification \\
model learning & discriminative & discriminative & discriminative & \cellcolor{gray!10} generative \\
direct probability estimates & no & no & no & \cellcolor{gray!10}	 yes \\
training complexity & $\mathcal{O}(Nh^2)$ & $\mathcal{O}(N^3)$ & $\mathcal{O}(N^{2.34})$ & \cellcolor{gray!10} $\mathcal{O}(Nh^2)$ \\
resulting model complexity & $hd$ & $|SV|d$, $|SV|\ll N $ & $Nd+1$ & \cellcolor{gray!10}$hd+4$\\ 
memory requirements & $\mathcal{O}(Nd)$ &  $\mathcal{O}(Nd)$ & $\mathcal{O}(N^2)$ & \cellcolor{gray!10}$\mathcal{O}(Nd)$ \\ 
source of regularization & Moore-Penrose pseudoinverse & margin maximization & quadratic loss penalty term & \cellcolor{gray!10} Ledoit-Wolf estimator  \\ 
\hline
 \end{tabular}
\end{sidewaystable*}




Let us now investigate the probabilistic interpretation of EEM.
Under the assumption that $ \Hlayer^\pm \sim \nor(\meanpoint^\pm, \Sigma^\pm) $ we have the conditional probabilities

$$
\Prob(\hpoint|\pm) = \nor(\meanpoint^\pm,\Sigma^\pm )[\hpoint],
$$

\noindent so from Bayes rule we conclude that

\begin{equation*}
 \begin{aligned}
\Prob(\pm|\hpoint) &= \frac{\Prob(\hpoint|\pm)\Prob(\pm)}{\Prob(\hpoint)} \\
&\propto \nor(\meanpoint^\pm,\Sigma^\pm )[\hpoint] \Prob(\pm),  
 \end{aligned}
\end{equation*}

\noindent  where $\Prob(\pm)$ is a prior classes' distribution. In our case, due to the balanced nature (meaning that despite classes imbalance we maximize the balanced quality measure such as Averaged Accuracy) we have $\Prob(\pm)=1/2$.

\noindent But
$$
\Prob(\hpoint) = \sum_{\ypoint \in \{+,-\}} \Prob(\hpoint|\ypoint) ,
$$

\noindent so
$$
\Prob(\pm|\hpoint) = \frac{
	    \nor(\meanpoint^\pm,\Sigma^\pm )[\hpoint]
	  }
	  { 
	    \sum_{\ypoint \in \{+,-\}} \nor(\meanpoint^\ypoint,\Sigma^\ypoint )[\hpoint]
	  }.
$$

Furthermore it is easy to show that under the normality assumption, the resulting classifier is optimal in the Bayesian sense.

\begin{remark}
 If data in feature space comes from Normal distributions $\nor(\meanpoint^\pm,\Sigma^\pm)$ then $\LinearOperator$ given by EEM minimizes probability of missclassification. More strictly speaking, if we draw $\hpoint^+$ with probability $1/2$ from $\nor(\meanpoint^+,\Sigma^+)$ and $\hpoint^-$ with 1/2 from $\nor(\meanpoint^-,\Sigma^-)$ then for any $\alpha \in \R^h$
 $$
 \Prob( \mp| \LinearOperator^T \hpoint^\pm ) \leq \Prob( \mp| \alpha^T \hpoint^\pm ) 
 $$
\end{remark}

\section{Theory: learning capabilities}

First we show that under some simplifing assumptions, proposed method behaves as Extreme Learning Machine (or Weighted Extreme Learning Machine~\cite{zong2013weighted}). 
 
 Before proceeding further we would like to remark that there are two popular notations for projecting data onto hyperplanes. One, used in ELM model, assumes that $\Hlayer$ is a row matrix and $\LinearOperator$ is a column vector, which results in the projection's equation 	$\Hlayer \LinearOperator$. Second one, used in SVM and in our paper, assumes that both $\Hlayer$ and $\LinearOperator$ are column oriented, which results in the $\beta^T\Hlayer$ projection. In the following theorem we will show some duality between $\LinearOperator$ found by ELM and by EEM. In order to do so, we will need to change the notation during the proof, which will be indicated.

\begin{theorem}
 Let us assume that we are given an arbitrary, balanced\footnote{analogous result can be shown for unbalanced dataset and Weighted ELM with particular weighting scheme.} dataset $\{(\xpoint_i,\ypoint_i)\}_{i=1}^N$, $\xpoint_i \in \R^d, \ypoint_i \in \{-1,+1\}, |\Xlayer^-|=|\Xlayer^+|$ which can be perfectly learned by ELM with $N$ hidden neurons. If this dataset's points' image through random neurons $\Hlayer=\varphi(\Xlayer)$ is centered (points' images have 0 mean) and classes have homogenous covariances (we can assume that $\exists_{a_\pm \in \R_+} \cov(\Hlayer) = a_+\cov(\Hlayer^+) = a_-\cov(\Hlayer^-)$ then EEM with the same hidden layer will also learn this dataset perfectly (with 0 error).  
\end{theorem}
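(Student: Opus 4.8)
The plan is to show that, under the stated hypotheses, the operator $\LinearOperator$ returned by EEM (the closed form \eqref{eq:final}) is a positive scalar multiple of the operator found by ELM, so that EEM inherits ELM's perfect separation. First I would unpack what ``perfectly learned by ELM'' means. In the paper's column notation, perfect regression is the statement $\Hlayer^T\LinearOperator_{\mathrm{ELM}} = \Yset$, i.e. every positive sample is sent to $+1$ and every negative sample to $-1$. In particular $\LinearOperator_{\mathrm{ELM}}^T\meanpoint^+ = 1$ and $\LinearOperator_{\mathrm{ELM}}^T\meanpoint^- = -1$, so $\LinearOperator_{\mathrm{ELM}}$ already satisfies the EEM constraint $\LinearOperator^T(\meanpoint^+-\meanpoint^-)=2$, and the two projected classes collapse to the point masses $\{+1\}$ and $\{-1\}$.

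Second, I would compute the ELM/least-squares direction in closed form using the two structural assumptions. Because $\Hlayer=\varphi(\Xlayer)$ is centered, the Gram matrix satisfies $\Hlayer\Hlayer^T = N\cov(\Hlayer)$, and because the dataset is balanced, $\Hlayer\Yset = \sum_i \ypoint_i \hpoint_i = \tfrac{N}{2}(\meanpoint^+-\meanpoint^-) = \tfrac{N}{2}\meanpoint$. The normal equations then give $\LinearOperator_{\mathrm{ELM}} = (\Hlayer\Hlayer^T)^{-1}\Hlayer\Yset = \tfrac12\cov(\Hlayer)^{-1}\meanpoint$, so $\LinearOperator_{\mathrm{ELM}}\propto \cov(\Hlayer)^{-1}\meanpoint$.

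Third --- the crux --- I would identify the EEM direction with this one. By \eqref{eq:final}, $\LinearOperator_{\mathrm{EEM}}\propto(\Sigma^+ + \Sigma^-)^{-1}\meanpoint$, the Fisher discriminant direction built from the \emph{within-class} covariances. The homogeneity assumption $\cov(\Hlayer)=a_+\cov(\Hlayer^+)=a_-\cov(\Hlayer^-)$ gives $\Sigma^+ + \Sigma^- = (a_+^{-1}+a_-^{-1})\cov(\Hlayer)$, a positive multiple of the \emph{total} covariance; hence $(\Sigma^+ + \Sigma^-)^{-1}\meanpoint \propto \cov(\Hlayer)^{-1}\meanpoint$ and therefore $\LinearOperator_{\mathrm{EEM}} = \kappa\,\LinearOperator_{\mathrm{ELM}}$ for some $\kappa>0$. (One can even bypass the homogeneity assumption: the balanced, centered decomposition $\cov(\Hlayer) = \tfrac12(\Sigma^+ + \Sigma^-) + \tfrac14\meanpoint\meanpoint^T$ together with the Sherman--Morrison formula shows the rank-one term $\tfrac14\meanpoint\meanpoint^T$ only rescales $\cov(\Hlayer)^{-1}\meanpoint$, leaving its direction unchanged; homogeneity merely makes the collapse of the within-class matrix to the total covariance immediate.) Finally I would conclude: since $\LinearOperator_{\mathrm{EEM}}$ and $\LinearOperator_{\mathrm{ELM}}$ point the same way and both obey $\LinearOperator^T\meanpoint=2$, every positive sample projects to $\meanpoint_+=+1$ and every negative sample to $\meanpoint_-=-1$, symmetric about $0$ by centering, so each training point sits at the mode of its own class density and falls on the correct side of the EEM threshold (at $0$ in the one-threshold case, and analogously in the two-threshold case), giving zero training error.

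I expect the main obstacle to be the third step: proving the Fisher/least-squares direction equivalence while correctly tracking the role of each hypothesis --- balance makes the $\pm1$ target coding the symmetric, Fisher-optimal coding and simplifies $\Hlayer\Yset$; centering removes the bias term, so the least-squares solution is governed by $\cov(\Hlayer)$ alone; and homogeneity is precisely what reduces the within-class scatter to a multiple of the total covariance. Once the proportionality $\LinearOperator_{\mathrm{EEM}}=\kappa\,\LinearOperator_{\mathrm{ELM}}$ is in hand, the transfer of perfect classification from ELM to EEM is routine.
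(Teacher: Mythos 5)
Your proposal is correct and follows essentially the same route as the paper's proof: the paper likewise computes $\LinearOperator_{\mathrm{ELM}} = \Hlayer^{\dagger}\Yset = (\Hlayer^T\Hlayer)^{-1}\Hlayer^T\Yset = \cov(\Hlayer)^{-1}\tfrac{N}{2}(\meanpoint^+ - \meanpoint^-)$ using centering and balance, then uses homogeneity to write $\Sigma^+ + \Sigma^- = \tfrac{a_+ + a_-}{a_+ a_-}\cov(\Hlayer)$, concluding $\LinearOperator_{\mathrm{ELM}} = a\,\LinearOperator_{\mathrm{EEM}}$ with $a>0$ and hence the same classifier. Your two refinements --- using the shared constraint $\LinearOperator^T(\meanpoint^+ - \meanpoint^-) = 2$ to pin the scalar to $1$, and the Sherman--Morrison aside showing the direction equivalence even without homogeneity --- go slightly beyond the paper, which instead invokes homogeneity once more to place the single threshold at the projected midpoint.
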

\begin{proof}
 In the first part of the proof we use the ELM notation.  Projected data is centered, so $\cov(\Hlayer) = \Hlayer^T\Hlayer$. ELM is able to learn this dataset perfectly, consequently $\Hlayer$ is invertible, thus also $\Hlayer^T\Hlayer$ is invertible, as a result $\covlw (\Hlayer)=\cov(\Hlayer)=\Hlayer^T\Hlayer$. We will now show that 
 $ 
 \exists_{a \in \R_+} \LinearOperator_{\text{ELM}} = a \cdot \LinearOperator_{\text{EEM}}.
 $
 First, let us recall that $\LinearOperator_{\text{ELM}} = \Hlayer^\dagger \Yset = \Hlayer^{-1}\Yset$ and $\LinearOperator_{\text{EEM}}=\frac{2(\Sigma^++\Sigma^-)^{-1}(\meanpoint^+-\meanpoint^-)}{\| \meanpoint^+-\meanpoint^- \|_{\Sigma^-+\Sigma^+}^2}$ where $\Sigma^\pm = \covlw (\Hlayer^\pm)$. Due to the assumption of geometric homogenity  $\LinearOperator_{\text{EEM}}=\frac{2}{\|\meanpoint^+-\meanpoint^-\|_{\Sigma}^2}(\frac{a_++a_-}{a_+a_-}\Sigma)^{-1}(\meanpoint^+-\meanpoint^-)$ , where $\Sigma = \covlw (\Hlayer)$. Therefore
 \begin{equation*}
  \begin{aligned}
    \LinearOperator_{\text{ELM}} &= \Hlayer^{-1}\Yset \\ 
	        &= (\Hlayer^T\Hlayer)^{-1}\Hlayer^T\Yset \\
		&= \covlw ^{-1}(\Hlayer)\Hlayer^T\Yset 
  \end{aligned}
 \end{equation*} 
 From now we change the notation back to the one used in this paper. 
 \begin{equation*}
  \begin{aligned} 	
		\LinearOperator_{\text{ELM}} &= \Sigma^{-1} \left (\sum_{\hpoint^+ \in \Hlayer^+} (+1 \cdot \hpoint^+) + \sum_{\hpoint^- \in \Hlayer^-} (-1 \cdot \hpoint^-) \right )\\
		&= \Sigma^{-1} \left (\sum_{\hpoint^+ \in \Hlayer^+} \hpoint^+ - \sum_{\hpoint^- \in \Hlayer^-} \hpoint^- \right )\\
		&= \Sigma^{-1}\frac{N}{2}(\meanpoint^+ - \meanpoint^-)\\
		&= \frac{N}{2} \frac{\|\meanpoint^+-\meanpoint^-\|_{\Sigma}^2}{2}\frac{a_++a_-}{a_+a_-} \LinearOperator_{\text{EEM}}\\
		&= a \cdot \LinearOperator_{\text{EEM}},
  \end{aligned}
 \end{equation*}
 for $a = \frac{N}{2} \frac{\|\meanpoint^+-\meanpoint^-\|_{\Sigma}^2}{2}\frac{a_++a_-}{a_+a_-} \in \R_+$. Again from homogenity we obtain just one equilibrium point, located in the $\LinearOperator_{\text{EEM}}^T(\meanpoint^+-\meanpoint^-)/2$ which results in the exact same classifier as the one given by ELM. This completes the proof.

\end{proof}

Similar result holds for EEKM and Least Squares Support Vector Machine.

\begin{theorem}
 Let us assume that we are given arbitrary, balanced\footnote{analogous result can be shown for unbalanced dataset and Balanced LS-SVM with particular weighting scheme.} dataset $\{(\xpoint_i,\ypoint_i)\}_{i=1}^N$, $\xpoint_i \in \R^d, \ypoint_i \in \{-1,+1\}, |\Xlayer^-|=|\Xlayer^+|$ which can be perfectly learned by LS-SVM. If dataset's points' images through Kernel induced projection $\varphi_\Kernel$ have homogenous classes' covariances (we can assume that $\exists_{a_\pm \in \R_+} \cov(\varphi_\Kernel(\Xlayer)) = a_+\cov(\varphi_\Kernel(\Xlayer^+)) = a_-\cov(\varphi_\Kernel(\Xlayer^-))$ then EEKM with the same kernel and $N$ hidden neurons will also learn this dataset perfectly (with 0 error).  
\end{theorem}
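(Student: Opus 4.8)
The plan is to follow the strategy of the previous theorem, replacing ELM's exact interpolation by the analogous realizable LS-SVM solution and the random projection by the exact kernel feature map. First I would note that taking all $N$ training points as the landmark set makes the Nystrom map exact, so $\Hlayer := \varphi_\Kernel(\Xlayer) \in \R^{N\times N}$ satisfies $\Hlayer^T\Hlayer = \Kernel(\Xlayer,\Xlayer)$, the full Gram matrix. Perfect learnability by LS-SVM forces this Gram matrix to be invertible, hence $\Hlayer$ has full rank and the Ledoit--Wolf shrinkage becomes trivial, giving $\covlw(\Hlayer^\pm) = \cov(\Hlayer^\pm) =: \Sigma^\pm$ exactly as in the ELM case. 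The goal then reduces to showing that the EEKM projection direction $\LinearOperator_{\text{EEKM}}$ coincides, up to a strictly positive scalar, with the primal weight $w_{\text{LS-SVM}}$ of the least squares machine; since positive scaling preserves the one dimensional ordering of the projected points, EEKM inherits the perfect separation achieved by LS-SVM.

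Next I would extract the direction of $w_{\text{LS-SVM}}$. Eliminating the slacks through $\ypoint_i^2 = 1$ rewrites the LS-SVM objective as regularised least squares of the labels onto the features with an intercept $b$, so in the realizable regime (the analogue of ELM's exact pseudoinverse fit, obtained as $C \to \infty$) the primal weight solves the interpolation system $w^T\hpoint_i + b = \ypoint_i$. Solving the associated normal equations, the intercept absorbs the global offset of the data, which is precisely why no centering hypothesis is needed here in contrast with the ELM theorem. Because the dataset is balanced, the label mean vanishes and the cross-covariance between features and labels collapses to $\tfrac{1}{2}(\meanpoint^+-\meanpoint^-)$, with the contributions of the global mean cancelling. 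Consequently $w_{\text{LS-SVM}} = c\,\cov(\Hlayer)^{-1}(\meanpoint^+-\meanpoint^-)$ for some $c > 0$, where $\cov(\Hlayer)$ is the total feature covariance.

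Finally I would invoke the homogeneity assumption $\cov(\Hlayer) = a_+\Sigma^+ = a_-\Sigma^-$, which yields $\cov(\Hlayer)^{-1} = \tfrac{a_++a_-}{a_+a_-}(\Sigma^++\Sigma^-)^{-1}$, and compare with the closed form $\LinearOperator_{\text{EEKM}} = \frac{2(\Sigma^++\Sigma^-)^{-1}(\meanpoint^+-\meanpoint^-)}{\|\meanpoint^+-\meanpoint^-\|^2_{\Sigma^++\Sigma^-}}$ from \eqref{eq:final}. Both are positive multiples of $(\Sigma^++\Sigma^-)^{-1}(\meanpoint^+-\meanpoint^-)$, so $w_{\text{LS-SVM}} = a\,\LinearOperator_{\text{EEKM}}$ with $a > 0$. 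Hence EEKM projects the two classes onto the same, perfectly separated, one dimensional configuration as LS-SVM, and its density based decision reduces to placing the Gaussian crossover threshold(s) inside the separating gap, giving zero training error.

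The main obstacle is the middle step: pinning down the exact weight direction of LS-SVM and identifying it with the between-means (Fisher) direction. Two points require care. First, the intercept $b$ must be eliminated without assuming centred data; balancedness is what guarantees that $b$ only soaks up the feature offset and does not rotate the direction. Second, the regularisation term means one must pass to the realizable limit so that the ridge solution aligns with the unregularised least squares direction, mirroring how the ELM proof used the exact pseudoinverse. A lesser, but genuine, subtlety is the concluding density argument: one must check that EEKM's analytic Gaussian thresholds---not merely the shared direction---yield zero error, which holds because perfect separability of the projected clusters forces the crossover point(s) of the two fitted one dimensional Gaussians into the gap between them.
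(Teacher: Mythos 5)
Your core reduction is sound, but it follows a genuinely different route from the paper. The paper's entire proof is one sentence: with $h=N$ hidden neurons the Nystr\"om feature map is exact, so under the homogeneity assumption EEKM degenerates to the kernelized Fisher Discriminant, and the identification of that discriminant with the LS-SVM solution is not derived at all --- it is quoted from Van Gestel et al.~\cite{van2004benchmarking}. You instead re-derive that equivalence by hand (slack elimination, the ridge-regression reading of LS-SVM, the balanced-data identity giving cross-covariance $\tfrac12(\meanpoint^+-\meanpoint^-)$, and the homogeneity identity $\cov(\Hlayer)^{-1}=\tfrac{a_++a_-}{a_+a_-}(\Sigma^++\Sigma^-)^{-1}$), in deliberate parallel with the paper's ELM theorem. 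That middle algebra is correct and has the merit of showing exactly where balancedness and homogeneity enter; what the paper's citation buys instead is coverage of the regularized (finite $C$) case, which your argument reaches only through an informal passage to the limit $C\to\infty$, a reading of ``perfectly learned by LS-SVM'' that the theorem statement does not clearly license.

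The genuine gap is your concluding step. The claim that ``perfect separability of the projected clusters forces the crossover point(s) of the two fitted one-dimensional Gaussians into the gap between them'' is false. Take projected positive values $\{0,1\}$ and projected negative values $\{2,100\}$: the clusters are perfectly separated, the fitted Gaussians are $\nor(0.5,\,0.25)$ and $\nor(51,\,2401)$, yet at the negative point $t=2$ one has $\nor(0.5,0.25)[2]\approx 0.0089 > \nor(51,2401)[2]\approx 0.0049$, so the density-based rule misclassifies it. Separation controls nothing about where the crossovers fall once the two projected variances differ substantially, and homogeneity of the $h$-dimensional covariances does not make the projected variances equal --- it only gives $a_+S_+=a_-S_-$. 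What actually closes this hole, and what the paper's ELM proof implicitly relies on, is the stronger reading of ``perfectly learned'': zero slack, i.e.\ exact interpolation $\LinearOperator^T\hpoint_i+b=\ypoint_i$, which collapses each projected class to a single point ($1-b$, resp.\ $-1-b$), after which positive rescaling and the single midpoint threshold that the paper extracts from homogeneity yield zero error trivially. Under your weaker reading (zero classification error at some finite $C$) the final step does not go through. A secondary, smaller flaw: perfect learnability does not force the Gram matrix to be invertible (separable data can have a singular Gram); invertibility should instead be taken from the EEKM construction itself, which already requires $\Kernel(\Xlayer\subsample,\Xlayer\subsample)^{-1/2}$ to exist.
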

\begin{proof}

It is a direct consequence of the fact that with $N$ hidden neurons and honogenous classes projections covariances, EEKM degenerates to the kernelized Fischer Discriminant which, as Gestel et al. showed ~\cite{van2004benchmarking}, is equivalent to the solution of the Least Squares SVM.
\end{proof}

\section{Practical considerations}

We can formulate the whole EEM training as a very simple algorithm (see Alg.~\ref{alg:eem}).
\begin{algorithm}[H]
\caption{Extreme Entropy (Kernel) Machine}
\label{alg:eem}
\begin{algorithmic}
\STATE 
\STATE \textbf{\textsc{train}}$(\Xlayer^+,\Xlayer^-)$
\STATE \hspace{0.5cm}\textbf{build} $\varphi$ \textbf{using Algorithm~\ref{alg:phi}}
\STATE \hspace{0.5cm}$ \Hlayer^\pm \gets \varphi(\Xlayer^\pm) $
\STATE \hspace{0.5cm}$ \meanpoint^\pm \gets 1/|\Hlayer^\pm| \sum_{\hpoint^\pm \in \Hlayer^\pm} \hpoint^\pm $
\STATE \hspace{0.5cm}$ \Sigma^\pm \gets \covlw(\Hlayer^\pm) $
\STATE \hspace{0.5cm}$ \LinearOperator \gets 2\left ( \Sigma^+ + \Sigma^- \right )^{-1}(\meanpoint^+-\meanpoint^-) / \|\meanpoint^+-\meanpoint^-\|_{\Sigma^+ + \Sigma^-} $
\STATE \hspace{0.5cm}$ \Ffunction(x) = \argmax_{\ypoint \in \{+,-\}} \nor(\LinearOperator^T \meanpoint^\ypoint, \LinearOperator^T \Sigma^\ypoint \LinearOperator)[x]$ 
\STATE \hspace{0.5cm}\textbf{return}  $\LinearOperator, \varphi, \Ffunction$
\STATE 
\STATE \textbf{\textsc{predict}}$(\Xlayer)$
\STATE \hspace{0.5cm}\textbf{return}  $\Ffunction( \LinearOperator^T \varphi(\Xlayer) )$
\end{algorithmic}
\end{algorithm}

\begin{algorithm}[H]
\caption{$\varphi$ building}
\label{alg:phi}
\begin{algorithmic}
\STATE 
\STATE \textbf{\textsc{Extreme Entropy Machine}}$(\Gfunction,h)$
\STATE \hspace{0.5cm}\textbf{select randomly } $w_i,b_i$ \textbf{ for } $i \in \{1,...,h\}$
\STATE \hspace{0.5cm}$ \varphi(\xpoint) = [\Gfunction(\xpoint,w_1,b_1),...,\Gfunction(\xpoint,w_h,b_h)]^T $
\STATE \hspace{0.5cm}\textbf{return}  $\varphi$
\STATE 
\STATE \textbf{\textsc{Extreme Entropy Kernel Machine}}$(\Kernel,h,\Xlayer)$
\STATE \hspace{0.5cm}\textbf{select randomly } $\Xlayer\subsample \subset \Xlayer, |\Xlayer\subsample|=h$ 
\STATE \hspace{0.5cm}$ \Kernel\subsample \gets \Kernel(\Xlayer\subsample,\Xlayer\subsample)^{-1/2}$
\STATE \hspace{0.5cm}$ \varphi_\Kernel(\xpoint) = \Kernel\subsample \Kernel(\Xlayer\subsample, \xpoint) $
\STATE \hspace{0.5cm}\textbf{return}  $\varphi_\Kernel$
\end{algorithmic}
\end{algorithm}

Resulting model consists of three elements:
\begin{itemize}
 \item feature projection function $\varphi$,
 \item linear operator $\LinearOperator$,
 \item classification rule $\Ffunction$.
\end{itemize}
As described before, $\Ffunction$ can be further compressed to just one or two thresholds $t_\pm$ using equations from previous sections. Either way, complexity of the resulting model is linear in terms of hidden units and classification of the new point takes $\mathcal{O}(dh)$ time. 

During EEM training, the most expensive part of the algorithm is the computation of the covariance estimators and inversion of the sum of covariances. Even computation of the empirical covariance takes $\mathcal{O}(Nh^2)$ time so the total complexity of training, equal to $\mathcal{O}(h^3 + Nh^2) = \mathcal{O}(Nh^2)$, is acceptable. It is worth noting that training of the ELM also takes exactly $\mathcal{O}(Nh^2)$ time as it requires computation of $\Hlayer^T\Hlayer$ for $\Hlayer \in \R^{N \times h}$. Training of EEMK requires additional computation of the square root of the sampled kernel matrix inverse $\Kernel(\Xlayer\subsample,\Xlayer\subsample)^{-1/2}$ but as $\Kernel(\Xlayer\subsample,\Xlayer\subsample) \in \R^{h \times h}$ can be computed in $\mathcal{O}(dh^2)$ and both inverting and square rooting can be done in $\mathcal{O}(h^3)$ we obtain exact same asymptotical computational complexity as the one of EEM. Procedure of square rooting and inverting are both always possible as assuming that $\Kernel$ 
is a valid 
kernel 
in the Mercer's sense yields that $\Kernel(\Xlayer\subsample,\Xlayer\subsample)$ is 
strictly positive definite and thus invertible. Further comparision of EEM, ELM and SVM is summarized in Table~\ref{tab:comparison}.

Next aspect we would like to discuss is the cost sensitive learning.
EEMs are balanced models in the sense that they are trying to maximize the balanced quality measures (like Averaged Accuracy or \textsc{GMean}). However, in practical applications it might be the case that we are actually more interested in the positive class then the negative one (like in the medical applications). Proposed model gives a direct probability estimates of $\Prob(\LinearOperator^T\hpoint|\ypoint)$, which we can easily convert to the cost sensitive classifier by introducing the prior probabilities of each class. Directly from Bayes Theorem, given $\Prob(+)$ and $\Prob(-)$, we can label our new sample $\hpoint$ according to
$$
\Prob(\ypoint|\LinearOperator^T\hpoint) \propto \Prob(\ypoint)\Prob(\LinearOperator^T\hpoint|\ypoint),
$$
so if we are given costs $C_+, C_- \in \R_+$ we can use them as weighting of priors
$$
cl(\xpoint) = \argmax_{\ypoint \in \{-,+\}} \tfrac{C_y}{C_-+C_+} \Prob(\LinearOperator^T\hpoint|\ypoint).
$$

Let us now investigate the possible efficiency bottleneck. In EEKM, the classification of the new point $\hpoint$ is based on
\begin{equation*}
 \begin{aligned}
cl(\xpoint) &= \Ffunction( \LinearOperator^T \varphi_\Kernel(\xpoint) ) \\
&= \Ffunction( \LinearOperator^T (\Kernel(\xpoint,\Xlayer\subsample)\Kernel\subsample)^T ) \\ 
&= \Ffunction( \LinearOperator^T (\Kernel\subsample)^T \Kernel(\xpoint,\Xlayer\subsample)^T )\\
&= \Ffunction( (\Kernel\subsample \LinearOperator)^T \Kernel(\Xlayer\subsample, \xpoint) ).
 \end{aligned}
\end{equation*}
One can convert EEKM to the SLFN by putting:
\begin{equation*}
  \begin{aligned}
  \hat \varphi_\Kernel(\xpoint) &= \Kernel(\Xlayer\subsample, \xpoint)\\
  \hat \LinearOperator &= \Kernel\subsample \LinearOperator ,
 \end{aligned}
\end{equation*}
so the classification rule becomes
\begin{equation*}
 \begin{aligned}
cl(\xpoint) &= \Ffunction( \hat \LinearOperator \phantom{}^T \hat \varphi_\Kernel(\xpoint) ).
 \end{aligned}
\end{equation*}
This way complexity of the new point's classification is exactly the same as in the case of EEM and ELM (or any other SLFN).
%
%
%
%
%
%
%
%
%
%
%
%

\section{Evaluation}

For the evaluation purposes we implemented five methods, namely: Weighted Extreme Learning Machine (WELM~\cite{zong2013weighted}), Extreme Entropy Machine (EEM), Extreme Entropy Kernel Machine (EEKM), Least Squares Support Vector Machines (LS-SVM~\cite{suykens1999least}) and Support Vector Machines (SVM~\cite{Vapnik95}).

All methods but SVM were implemented using Python with use of the bleeding-edge versions of \textsc{numpy}~\cite{van2011numpy} and \textsc{scipy}~\cite{jones2001scipy} libraries  included in \textsc{anaconda}\footnote{\url{https://store.continuum.io/cshop/anaconda/}} for fair comparision. For SVM we used highly efficient \textsc{libSVM}~\cite{chang2011libsvm} library with bindings avaliable in \textsc{scikit-learn}~\cite{pedregosa2011scikit}. 
Random projection based methods (WELM, EEM) were tested using three following generalized activation functions $\Gfunction(\xpoint,w,b)$
\begin{itemize}
 \item sigmoid (\textsc{sig}): $\tfrac{1}{1+\exp(-\langle w,\xpoint \rangle + b)} $,
 \item normalized sigmoid (\textsc{nsig}): $  \tfrac{1}{1+\exp(-\langle w,\xpoint \rangle /d + b)} $,
 \item radial basis function (\textsc{rbf}): $  \exp(-b \| w - \xpoint \|^2 )$.
\end{itemize}
Random parameters (weights and biases) were selected from uniform distributions on $[0,1]$. Training of WELM was performed using Moore-Penrose pseudoinverse and of EEM using Ledoit-Wolf covariance estimator, as both are parameter less, closed form estimators of required objects.
For kernel methods (EEKM, LS-SVM, SVM) we used the Gaussian kernel (\textsc{rbf}) $ \Kernel_\gamma(\xpoint_i,\xpoint_j)=\exp(-\gamma \| \xpoint_i-\xpoint_j \|^2 )$.
In all methods requiring class balancing schemes (WELM, LS-SVM, SVM) we used balance weights $w_i$ equal to the ratio of bigger class and current class (so $\sum_{i=1}^N w_i \ypoint_i = 0$). 

Metaparameters of each model were fitted, performed grid search included: hidden layer size $h=50,100,250,500,1000$ (WELM, EEM, EEKM), Gaussian Kernel width $\gamma=10^{-10},\ldots,10^0$ (EEKM, LS-SVM, SVM), SVM regularization parameter $C=10^{-1},\ldots,10^{10}$ (LS-SVM, SVM).

Datasets' features were linearly scaled in order to have each feature in the interval $[0,1]$. No other data whitening/filtering was performed. All experiments were performed in repeated 10-fold stratified cross-validation.

We use $\textsc{GMean}$\footnote{$\textsc{GMean}(\text{TP,FP,TN,FN}) = \sqrt{\frac{\text{TP}}{\text{TP}+\text{FN}} \cdot \frac{\text{TN}}{\text{TN}+\text{FP}}}$.} (geometric mean of accuracy over positive and negative samples) 	as an evaluation metric. due to its balanced nature and usage in previous works regarding Weighted Extreme Learning Machines~\cite{zong2013weighted}. 

\begin{table}[ht]
\caption{Characteristics of used datasets }
\label{tab:data}
\begin{center}

 \begin{tabular}{lrrrrrrr}
 \toprule
 dataset & d  & $|\Xlayer^-|$& $|\Xlayer^+|$ \\ 
 \midrule
\textsc{australian} & 14 & 383 & 307 \\
\textsc{bank} & 4 & 762 & 610 \\
\textsc{breast cancer} & 9 & 444 & 239 \\
\textsc{diabetes} & 8 & 268 & 500 \\
\textsc{german numer} & 24 & 700 & 300 \\
\textsc{heart} & 13 & 150 & 120 \\
\textsc{liver-disorders} & 6 & 145 & 200 \\
\textsc{sonar} & 60 & 111 & 97 \\
\textsc{splice} & 60 & 483 & 517 \\

\midrule

\textsc{abalone7} & 10 & 3786 & 391 \\
\textsc{arythmia} & 261 & 427 & 25 \\
\textsc{car evaluation} & 21 & 1594 & 134 \\
\textsc{ecoli} & 7 & 301 & 35 \\
\textsc{libras move} & 90 & 336 & 24 \\
\textsc{oil spill} & 48 & 896 & 41 \\
\textsc{sick euthyroid} & 42 & 2870 & 293 \\
\textsc{solar flare} & 32 & 1321 & 68 \\
\textsc{spectrometer} & 93 & 486 & 45 \\

\midrule

\textsc{forest cover} & 54 & 571519 & 9493 \\
\textsc{isolet} & 617 & 7197 & 600 \\
\textsc{mammography} & 6 & 10923 & 260 \\
\textsc{protein homology} & 74 & 144455 & 1296 \\
\textsc{webpages} & 300 & 33799 & 981 \\

\bottomrule

\end{tabular}
\end{center}
\end{table}

\subsection{Basic \textsc{UCI} datasets}

We start our experiments with nine datasets coming from \textsc{\textsc{UCI} repository~\cite{UCI}}, namely \textsc{australian}, \textsc{breast-cancer}, \textsc{diabetes}, \textsc{german.numer}, \textsc{heart}, \textsc{ionosphere},  \textsc{liver-disorders}, \textsc{sonar} and \textsc{splice}, summarized in the Table~\ref{tab:data}. This datasets include rather balanced,
low dimensional problems.

On such data, EEM seems to perform noticably better than ELM when using RBF activation function (see Table~\ref{tab:uci}), and rather similar when using sigmoid one -- in such a scenario, for some datasets ELM achieves better results while for other EEM wins. 
\begin{sidewaystable*}[ph!]
\centering
  \caption{\textsc{GMean} on \textsc{UCI} datasets}
  \label{tab:uci}
 \begin{tabular}{ccccccccccccc}
 \toprule
 &WELM$_{\textsc{sig}}$& EEM$_{\textsc{sig}}$&WELM$_{\textsc{nsig}}$& EEM$_{\textsc{nsig}}$  &WELM$_{\textsc{rbf}}$   & EEM$_{\textsc{rbf}}$  &  LS-SVM$_{\textsc{rbf}}$ & EEKM$_{\textsc{rbf}}$ &  SVM$_{\textsc{rbf}}$\\
 \midrule
 
{ \textsc{ australian  } } &

\scriptsize
86.3
\tiny $\pm4.5$
&
\scriptsize
\textbf{
87.0
}
\tiny $\pm4.0$
&
\scriptsize
85.9
\tiny $\pm4.4$
&
\scriptsize
86.5
\tiny $\pm3.2$
&
\scriptsize
85.8
\tiny $\pm4.9$
&
\scriptsize
86.9
\tiny $\pm4.4$
&
\scriptsize
86.9
\tiny $\pm4.1$
&
\scriptsize
86.8
\tiny $\pm3.8$
&
\scriptsize
86.8
\tiny $\pm3.7$
\\
\midrule

{ \textsc{ breast-cancer  } } &

\scriptsize
96.9
\tiny $\pm1.7$
&
\scriptsize
97.3
\tiny $\pm1.2$
&
\scriptsize
97.6
\tiny $\pm1.5$
&
\scriptsize
97.4
\tiny $\pm1.2$
&
\scriptsize
96.6
\tiny $\pm1.8$
&
\scriptsize
97.3
\tiny $\pm1.1$
&
\scriptsize
97.6
\tiny $\pm1.3$
&
\scriptsize
\textbf{
97.8
}
\tiny $\pm1.1$
&
\scriptsize
96.8
\tiny $\pm1.7$
\\
\midrule

{ \textsc{ diabetes  } } &

\scriptsize
74.2
\tiny $\pm4.6$
&
\scriptsize
74.5
\tiny $\pm4.6$
&
\scriptsize
74.1
\tiny $\pm5.5$
&
\scriptsize
74.9
\tiny $\pm5.0$
&
\scriptsize
73.2
\tiny $\pm5.6$
&
\scriptsize
74.9
\tiny $\pm5.9$
&
\scriptsize
75.5
\tiny $\pm5.6$
&
\scriptsize
\textbf{
75.7
}
\tiny $\pm5.6$
&
\scriptsize
74.8
\tiny $\pm3.5$
\\
\midrule

{ \textsc{ german } } &

\scriptsize
68.8
\tiny $\pm6.9$
&
\scriptsize
71.3
\tiny $\pm4.1$
&
\scriptsize
70.7
\tiny $\pm6.1$
&
\scriptsize
72.4
\tiny $\pm5.4$
&
\scriptsize
71.1
\tiny $\pm6.1$
&
\scriptsize
72.2
\tiny $\pm5.7$
&
\scriptsize
73.2
\tiny $\pm4.5$
&
\scriptsize
72.9
\tiny $\pm5.3$
&
\scriptsize
\textbf{
73.4
}
\tiny $\pm5.4$
\\
\midrule

{ \textsc{ heart  } } &

\scriptsize
78.8
\tiny $\pm6.3$
&
\scriptsize
82.5
\tiny $\pm7.4$
&
\scriptsize
78.1
\tiny $\pm7.0$
&
\scriptsize
83.7
\tiny $\pm7.2$
&
\scriptsize
80.2
\tiny $\pm8.9$
&
\scriptsize
81.9
\tiny $\pm6.9$
&
\scriptsize
83.7
\tiny $\pm8.5$
&
\scriptsize
83.6
\tiny $\pm7.5$
&
\scriptsize
\textbf{
84.6
}
\tiny $\pm7.0$
\\
\midrule

{ \textsc{ ionosphere  } } &

\scriptsize
71.5
\tiny $\pm9.5$
&
\scriptsize
77.0
\tiny $\pm12.8$
&
\scriptsize
82.7
\tiny $\pm7.8$
&
\scriptsize
84.6
\tiny $\pm9.1$
&
\scriptsize
85.6
\tiny $\pm8.4$
&
\scriptsize
90.8
\tiny $\pm5.2$
&
\scriptsize
91.2
\tiny $\pm5.5$
&
\scriptsize
93.4
\tiny $\pm4.3$
&
\scriptsize
\textbf{
94.7
}
\tiny $\pm3.9$
\\
\midrule

{ \textsc{ liver-disorders  } } &

\scriptsize
68.1
\tiny $\pm8.0$
&
\scriptsize
68.6
\tiny $\pm8.9$
&
\scriptsize
66.3
\tiny $\pm8.2$
&
\scriptsize
62.1
\tiny $\pm8.1$
&
\scriptsize
67.2
\tiny $\pm5.9$
&
\scriptsize
71.4
\tiny $\pm7.0$
&
\scriptsize
71.1
\tiny $\pm8.3$
&
\scriptsize
70.2
\tiny $\pm6.9$
&
\scriptsize
\textbf{
72.3
}
\tiny $\pm6.2$
\\
\midrule

{ \textsc{ sonar  } } &

\scriptsize
66.7
\tiny $\pm10.1$
&
\scriptsize
70.1
\tiny $\pm11.5$
&
\scriptsize
80.2
\tiny $\pm7.4$
&
\scriptsize
78.3
\tiny $\pm11.2$
&
\scriptsize
83.2
\tiny $\pm6.9$
&
\scriptsize
82.8
\tiny $\pm5.2$
&
\scriptsize
86.5
\tiny $\pm5.4$
&
\scriptsize
\textbf{
87.0
}
\tiny $\pm7.5$
&
\scriptsize
83.0
\tiny $\pm7.1$
\\
\midrule

{ \textsc{ splice  } } &

\scriptsize
64.7
\tiny $\pm2.8$
&
\scriptsize
49.4
\tiny $\pm5.5$
&
\scriptsize
81.8
\tiny $\pm3.2$
&
\scriptsize
80.9
\tiny $\pm2.7$
&
\scriptsize
75.5
\tiny $\pm3.9$
&
\scriptsize
82.2
\tiny $\pm3.5$
&
\scriptsize
\textbf{
89.9
}
\tiny $\pm3.0$
&
\scriptsize
88.0
\tiny $\pm4.0$
&
\scriptsize
88.0
\tiny $\pm2.2$
\\
\bottomrule

 \end{tabular}
\end{sidewaystable*}
Results obtained for EEKM are comparable with those obtained by LS-SVM and SVM, in both cases proposed method achieves better results on about third of problems, on the third it draws and on a third it loses. This experiments can be seen as a proof of concept of the whole methodology, showing that it can be truly a reasonable alternative for existing models in some problems. It appears that contrary to ELM, proposed methods (EEM and EEKM) achieve best scores across all considered models in some of the datasets regardless of the used activation function/kernel (only Support Vector Machines and their least squares counterpart are competetitive in this sense).

\subsection{Highly unbalanced datasets}

In the second part we proceeded to the nine highly unbalanced datasets, summarized in the second part of the Table~\ref{tab:data}. Ratio between bigger and smaller class varies from $10:1$ to even $20:1$ which makes them really hard for unbalanced models. Obtained results (see Table~\ref{tab:unb}) resembles these obtained on \textsc{UCI} repository. We can see better results in about half of experiments if we fix a particular activation function/kernel (so we compare ELM$_x$ with EEM$_x$ and LS-SVM$_x$ with EEKM$_x$).
\begin{sidewaystable*}[ph!]
\centering
\caption{Highly unbalanced datasets}
\label{tab:unb}
 \begin{tabular}{ccccccccccccc}
 \toprule
 &WELM$_{\textsc{sig}}$& EEM$_{\textsc{sig}}$&WELM$_{\textsc{nsig}}$& EEM$_{\textsc{nsig}}$  &WELM$_{\textsc{rbf}}$   & EEM$_{\textsc{rbf}}$  &  LS-SVM$_{\textsc{rbf}}$ & EEKM$_{\textsc{rbf}}$ &  SVM$_{\textsc{rbf}}$\\
 \midrule
 
{ \textsc{ abalone7 } } &

\scriptsize
79.7
\tiny $\pm2.3$
&
\scriptsize
79.8
\tiny $\pm3.5$
&
\scriptsize
80.0
\tiny $\pm2.8$
&
\scriptsize
76.1
\tiny $\pm3.7$
&
\scriptsize
80.1
\tiny $\pm3.2$
&
\scriptsize
79.7
\tiny $\pm3.6$
&
\scriptsize
\textbf{
80.2
}
\tiny $\pm3.4$
&
\scriptsize
79.9
\tiny $\pm3.4$
&
\scriptsize
79.7
\tiny $\pm2.7$
\\
\midrule

{ \textsc{ arythmia } } &

\scriptsize
28.3
\tiny $\pm35.4$
&
\scriptsize
40.3
\tiny $\pm20.9$
&
\scriptsize
64.2
\tiny $\pm24.6$
&
\scriptsize
\textbf{
85.6
}
\tiny $\pm10.3$
&
\scriptsize
66.9
\tiny $\pm25.8$
&
\scriptsize
79.4
\tiny $\pm12.5$
&
\scriptsize
84.4
\tiny $\pm10.0$
&
\scriptsize
85.2
\tiny $\pm10.6$
&
\scriptsize
80.9
\tiny $\pm11.8$
\\
\midrule

{ \textsc{ car evaluation } } &

\scriptsize
99.1
\tiny $\pm0.3$
&
\scriptsize
98.9
\tiny $\pm0.4$
&
\scriptsize
99.0
\tiny $\pm0.3$
&
\scriptsize
97.9
\tiny $\pm0.6$
&
\scriptsize
99.0
\tiny $\pm0.3$
&
\scriptsize
98.5
\tiny $\pm0.3$
&
\scriptsize
99.5
\tiny $\pm0.2$
&
\scriptsize
99.2
\tiny $\pm0.3$
&
\scriptsize
\textbf{
100.0
}
\tiny $\pm0.0$
\\
\midrule

{ \textsc{ ecoli } } &

\scriptsize
86.9
\tiny $\pm6.5$
&
\scriptsize
88.3
\tiny $\pm7.1$
&
\scriptsize
86.9
\tiny $\pm6.8$
&
\scriptsize
88.6
\tiny $\pm6.9$
&
\scriptsize
86.4
\tiny $\pm7.0$
&
\scriptsize
88.8
\tiny $\pm7.2$
&
\scriptsize
89.2
\tiny $\pm6.3$
&
\scriptsize
\textbf{
89.4
}
\tiny $\pm6.9$
&
\scriptsize
88.5
\tiny $\pm6.2$
\\
\midrule

{ \textsc{ libras move } } &

\scriptsize
65.5
\tiny $\pm10.7$
&
\scriptsize
19.3
\tiny $\pm8.1$
&
\scriptsize
82.5
\tiny $\pm12.0$
&
\scriptsize
93.0
\tiny $\pm11.8$
&
\scriptsize
89.6
\tiny $\pm11.9$
&
\scriptsize
93.9
\tiny $\pm11.9$
&
\scriptsize
96.5
\tiny $\pm8.6$
&
\scriptsize
\textbf{
96.6
}
\tiny $\pm8.7$
&
\scriptsize
91.6
\tiny $\pm11.9$
\\
\midrule

{ \textsc{ oil spill } } &

\scriptsize
86.0
\tiny $\pm6.9$
&
\scriptsize
\textbf{
88.8
}
\tiny $\pm6.5$
&
\scriptsize
83.8
\tiny $\pm7.6$
&
\scriptsize
84.7
\tiny $\pm8.7$
&
\scriptsize
85.8
\tiny $\pm9.3$
&
\scriptsize
88.1
\tiny $\pm6.1$
&
\scriptsize
86.7
\tiny $\pm8.4$
&
\scriptsize
87.2
\tiny $\pm4.9$
&
\scriptsize
85.7
\tiny $\pm11.4$
\\
\midrule

{ \textsc{ sick euthyroid } } &

\scriptsize
88.1
\tiny $\pm1.7$
&
\scriptsize
87.9
\tiny $\pm2.4$
&
\scriptsize
88.5
\tiny $\pm2.1$
&
\scriptsize
81.7
\tiny $\pm2.7$
&
\scriptsize
89.1
\tiny $\pm1.9$
&
\scriptsize
88.2
\tiny $\pm2.4$
&
\scriptsize
89.5
\tiny $\pm1.7$
&
\scriptsize
89.3
\tiny $\pm1.9$
&
\scriptsize
\textbf{
90.9
}
\tiny $\pm2.0$
\\
\midrule

{ \textsc{ solar flare } } &

\scriptsize
60.4
\tiny $\pm16.8$
&
\scriptsize
63.7
\tiny $\pm12.9$
&
\scriptsize
61.3
\tiny $\pm10.8$
&
\scriptsize
67.4
\tiny $\pm9.0$
&
\scriptsize
60.3
\tiny $\pm14.8$
&
\scriptsize
68.9
\tiny $\pm9.3$
&
\scriptsize
67.3
\tiny $\pm8.8$
&
\scriptsize
67.3
\tiny $\pm9.0$
&
\scriptsize
\textbf{
70.9
}
\tiny $\pm8.5$
\\
\midrule

{ \textsc{ spectrometer } } &

\scriptsize
82.9
\tiny $\pm13.0$
&
\scriptsize
87.3
\tiny $\pm7.8$
&
\scriptsize
88.0
\tiny $\pm10.8$
&
\scriptsize
90.2
\tiny $\pm8.6$
&
\scriptsize
86.6
\tiny $\pm8.2$
&
\scriptsize
93.0
\tiny $\pm14.6$
&
\scriptsize
94.6
\tiny $\pm8.4$
&
\scriptsize
93.5
\tiny $\pm14.7$
&
\scriptsize
\textbf{
95.4
}
\tiny $\pm5.1$
\\
\bottomrule

 \end{tabular}

\end{sidewaystable*}
Table~\ref{tab:unbt} shows that training time of Extreme Entropy Machines are comparable with the ones obtained by Extreme Learning Machines (differences on the level of $0.1-0.2$ are not significant on such datasets' sizes). We have a robust method which learns in below two seconds a model for hundreads/thousands of examples. For larger datasets (like \textsc{abalone7} or \textsc{sick euthyroid}) proposed methods not only outperform SVM and LS-SVM in terms of robustness but there is also noticable difference between their training times and ELMs. This suggests that even though ELM and EEM are quite similar and on small datasets are equally fast, EEM can better scale up to truly big datasets. Obviously obtained training times do not resemble the full training time as it strongly depends on the technique used for metaparameters selection and resolution of grid search (or other parameters tuning technique). In such full scenario, training times of SVM 
related models is significantly bigger due to the requirment of exact tuning of both $C$ and $\gamma$ in real domains.  

\begin{sidewaystable*}[ph!]
\centering
\caption{ highly unbalanced datasets times}
\label{tab:unbt}
 \begin{tabular}{ccccccccccccc}
 \toprule
 &WELM$_{\textsc{sig}}$& EEM$_{\textsc{sig}}$ &WELM$_{\textsc{nsig}}$& EEM$_{\textsc{nsig}}$ &WELM$_{\textsc{rbf}}$   & EEM$_{\textsc{rbf}}$  &  LS-SVM$_{\textsc{rbf}}$ & EEKM$_{\textsc{rbf}}$ &  SVM$_{\textsc{rbf}}$\\
 \midrule
 
{ \textsc{ abalone7 } } &

\scriptsize
1.9
s
&
\scriptsize
1.2
s
&
\scriptsize
2.5
s
&
\scriptsize
1.6
s
&
\scriptsize
1.8
s
&
\scriptsize
\textbf{
1.2
}
s
&
\scriptsize
20.8
s
&
\scriptsize
1.9
s
&
\scriptsize
4.7
s
\\
\midrule

{ \textsc{ arythmia } } &

\scriptsize
0.2
s
&
\scriptsize
0.7
s
&
\scriptsize
0.3
s
&
\scriptsize
0.9
s
&
\scriptsize
0.3
s
&
\scriptsize
0.7
s
&
\scriptsize
\textbf{
0.1
}
s
&
\scriptsize
0.3
s
&
\scriptsize
0.1
s
\\
\midrule

{ \textsc{ car evaluation } } &

\scriptsize
1.3
s
&
\scriptsize
0.9
s
&
\scriptsize
1.5
s
&
\scriptsize
1.0
s
&
\scriptsize
1.1
s
&
\scriptsize
0.9
s
&
\scriptsize
2.0
s
&
\scriptsize
1.4
s
&
\scriptsize
\textbf{
0.1
}
s
\\
\midrule

{ \textsc{ ecoli } } &

\scriptsize
0.2
s
&
\scriptsize
0.8
s
&
\scriptsize
0.2
s
&
\scriptsize
0.8
s
&
\scriptsize
0.1
s
&
\scriptsize
0.7
s
&
\scriptsize
\textbf{
0.0
}
s
&
\scriptsize
0.1
s
&
\scriptsize
0.2
s
\\
\midrule

{ \textsc{ libras move } } &

\scriptsize
0.2
s
&
\scriptsize
0.9
s
&
\scriptsize
0.2
s
&
\scriptsize
0.8
s
&
\scriptsize
0.1
s
&
\scriptsize
0.7
s
&
\scriptsize
0.0
s
&
\scriptsize
0.1
s
&
\scriptsize
\textbf{
0.0
}
s
\\
\midrule

{ \textsc{ oil spill } } &

\scriptsize
0.7
s
&
\scriptsize
0.8
s
&
\scriptsize
0.6
s
&
\scriptsize
0.8
s
&
\scriptsize
0.6
s
&
\scriptsize
0.8
s
&
\scriptsize
0.4
s
&
\scriptsize
0.9
s
&
\scriptsize
\textbf{
0.1
}
s
\\
\midrule

{ \textsc{ sick euthyroid } } &

\scriptsize
1.5
s
&
\scriptsize
1.1
s
&
\scriptsize
1.4
s
&
\scriptsize
\textbf{
1.1
}
s
&
\scriptsize
1.5
s
&
\scriptsize
1.1
s
&
\scriptsize
9.6
s
&
\scriptsize
1.7
s
&
\scriptsize
21.0
s
\\
\midrule

{ \textsc{ solar flare } } &

\scriptsize
\textbf{
0.7
}
s
&
\scriptsize
0.8
s
&
\scriptsize
0.7
s
&
\scriptsize
0.8
s
&
\scriptsize
0.8
s
&
\scriptsize
0.8
s
&
\scriptsize
1.1
s
&
\scriptsize
1.3
s
&
\scriptsize
16.1
s
\\
\midrule

{ \textsc{ spectrometer } } &

\scriptsize
0.2
s
&
\scriptsize
0.7
s
&
\scriptsize
0.3
s
&
\scriptsize
0.7
s
&
\scriptsize
0.2
s
&
\scriptsize
0.7
s
&
\scriptsize
0.1
s
&
\scriptsize
0.3
s
&
\scriptsize
\textbf{
0.0
}
s
\\
\bottomrule

 \end{tabular}

\end{sidewaystable*}

\subsection{Extremely unbalanced datasets}

Third part of experiments consists of extremely unbalanced datasets (with class imbalance up to 100:1) containing tens and hundreads thousands of examples. Five analyzed datasets span from NLP tasks (\textsc{webpages}) through medical applications (\textsc{mammography}) to bioinformatics (\textsc{protein homology}). This type of datasets often occur in the true data mining which makes these results much more practical than the one obtained on small/balanced data.

0.0 scores on \textsc{Isolet} dataset (see Table~\ref{tab:big}) for sigmoid based random projections is a result of very high values ($\sim~200$) of $\langle \xpoint,w \rangle$ for all $\xpoint$, which results in $\Gfunction(\xpoint,w,b)=1$, so the whole dataset is reduced to the singleton $\{ [1,\ldots,1]^T \} \subset \R^h \subset \Hspace $ which obviously is not separable by any classifier, netither ELM nor EEM.

For other activation functions we see that EEM achieves sllightly worse results than ELM. On the other hand, scores of EEKM generally outperform the ones obtained by ELM and are very close to the ones obtained by well tuned SVM and LS-SVM. In the same time, EEM and EEKM were trained significantly faster, as Table~\ref{tab:bigt} shows, it was order of magnitude faster than SVM related models and even $1.5-2 \times$ faster than ELM. It seems that the Ledoit-Wolf covariance estimation computation with this matrices inversion is simply a faster operation (scales better) than computation of the Moore-Penrose pseudoinverse of the $\Hlayer^T\Hlayer$. Obviously one can alternate ELM training routine to the regularized one where instead of $(\Hlayer^T\Hlayer)^\dagger$ one computes $(\Hlayer^T\Hlayer + I/C)^{-1}$, but we are analyzing here parameter less approaches, while the analogous could be used for EEM in the form of $(\cov(\Xlayer^-)+\cov(\Xlayer^+) + I/C)^{-1}$ instead of computing Ledoit-Wolf estimator. In 
other words, in the parameter less 
training scenario,
 as described in this paper EEMs seems to scale better than ELMs while still obtaining similar classification results. In the same time EEKM obtains SVM-level results with orders of magnitude smaller training times. Both ELM and EEM could be transformed into regularization parameter based learning, but this is beyond the scope of this work.

\begin{sidewaystable*}[ph!]
\centering
\caption{Big highly unbalanced datasets}
\label{tab:big}
 \begin{tabular}{ccccccccccccc}
 \toprule
 &WELM$_{\textsc{sig}}$& EEM$_{\textsc{sig}}$ &WELM$_{\textsc{nsig}}$& EEM$_{\textsc{nsig}}$ &WELM$_{\textsc{rbf}}$   & EEM$_{\textsc{rbf}}$  &  LS-SVM$_{\textsc{rbf}}$ & EEKM$_{\textsc{rbf}}$ &  SVM$_{\textsc{rbf}}$\\
 \midrule
 
{ \textsc{ forest cover } } &

\scriptsize
90.8
\tiny $\pm0.3$
&
\scriptsize
90.5
\tiny $\pm0.3$
&
\scriptsize
90.7
\tiny $\pm0.3$
&
\scriptsize
85.1
\tiny $\pm0.4$
&
\scriptsize
90.9
\tiny $\pm0.3$
&
\scriptsize
87.1
\tiny $\pm0.0$
&
\scriptsize
 - 
&
\scriptsize
\textbf{
91.8
}
\tiny $\pm0.3$
&
\scriptsize
-
\\
\midrule

{ \textsc{ isolet } } &

\scriptsize
0.0
\tiny $\pm0.0$
&
\scriptsize
0.0
\tiny $\pm0.0$
&
\scriptsize
96.3
\tiny $\pm0.7$
&
\scriptsize
95.6
\tiny $\pm1.1$
&
\scriptsize
93.0
\tiny $\pm0.9$
&
\scriptsize
91.4
\tiny $\pm1.0$
&
\scriptsize
\textbf{
98.0
}
\tiny $\pm0.7$
&
\scriptsize
97.4
\tiny $\pm0.6$
&
\scriptsize
97.6
\tiny $\pm0.6$
\\
\midrule

{ \textsc{ mammography } } &

\scriptsize
90.4
\tiny $\pm2.8$
&
\scriptsize
89.0
\tiny $\pm3.2$
&
\scriptsize
90.7
\tiny $\pm3.3$
&
\scriptsize
87.2
\tiny $\pm3.0$
&
\scriptsize
89.9
\tiny $\pm3.8$
&
\scriptsize
89.5
\tiny $\pm3.1$
&
\scriptsize
\textbf{
91.0
}
\tiny $\pm3.1$
&
\scriptsize
89.5
\tiny $\pm3.1$
&
\scriptsize
89.8
\tiny $\pm3.8$
\\
\midrule

{ \textsc{ protein homology } } &

\scriptsize
95.3
\tiny $\pm0.8$
&
\scriptsize
94.9
\tiny $\pm0.8$
&
\scriptsize
95.1
\tiny $\pm0.9$
&
\scriptsize
94.2
\tiny $\pm1.3$
&
\scriptsize
95.0
\tiny $\pm1.0$
&
\scriptsize
95.1
\tiny $\pm1.1$
&
\scriptsize
-
&
\scriptsize
\textbf{
95.7
}
\tiny $\pm0.9$
&
\scriptsize
 - 
\\
\midrule

{ \textsc{ webpages } } &

\scriptsize
72.0
\tiny $\pm0.0$
&
\scriptsize
73.1
\tiny $\pm2.0$
&
\scriptsize
93.0
\tiny $\pm1.8$
&
\scriptsize
\textbf{
93.1
}
\tiny $\pm1.7$
&
\scriptsize
86.7
\tiny $\pm0.0$
&
\scriptsize
84.4
\tiny $\pm1.6$
&
\scriptsize
-
&
\scriptsize
\textbf{
93.1
}
\tiny $\pm1.7$
&
\scriptsize
\textbf{
93.1
}
\tiny $\pm1.7$
\\
\bottomrule

 \end{tabular}

\end{sidewaystable*}
\begin{sidewaystable*}[ph!]
\centering
\caption{Big highly unbalanced datasets times}
\label{tab:bigt}
 \begin{tabular}{ccccccccccccc}
 
 \toprule
 &WELM$_{\textsc{sig}}$& EEM$_{\textsc{sig}}$ &WELM$_{\textsc{nsig}}$& EEM$_{\textsc{nsig}}$ &WELM$_{\textsc{rbf}}$   & EEM$_{\textsc{rbf}}$  &  LS-SVM$_{\textsc{rbf}}$ & EEKM$_{\textsc{rbf}}$ &  SVM$_{\textsc{rbf}}$\\
 \midrule
 
{ \textsc{ forest cover } } &

\scriptsize
110.7
s
&
\scriptsize
104.6
s
&
\scriptsize
144.9
s
&
\scriptsize
45.6
s
&
\scriptsize
111.3
s
&
\scriptsize
\textbf{
38.2
}
s
&
\scriptsize
$>$600 s
&
\scriptsize
107.4
s
&
\scriptsize
$>$600 s
\\
\midrule

{ \textsc{ isolet } } &

\scriptsize
9.7
s
&
\scriptsize
4.5 
s
&
\scriptsize
4.9
s
&
\scriptsize
3.0
s
&
\scriptsize
3.4
s
&
\scriptsize
\textbf{
2.1
}
s
&
\scriptsize
126.9
s
&
\scriptsize
3.2
s
&
\scriptsize
53.5
s
\\
\midrule

{ \textsc{ mammography } } &

\scriptsize
4.0
s
&
\scriptsize
\textbf{
2.2
}
s
&
\scriptsize
6.1
s
&
\scriptsize
3.0
s
&
\scriptsize
4.0
s
&
\scriptsize
2.2
s
&
\scriptsize
327.3
s
&
\scriptsize
3.3
s
&
\scriptsize
9.5
s
\\
\midrule

{ \textsc{ protein homology } } &

\scriptsize
27.6
s
&
\scriptsize
\textbf{
21.6
}
s
&
\scriptsize
86.3
s
&
\scriptsize
27.9
s
&
\scriptsize
62.5
s
&
\scriptsize
22.0
s
&
\scriptsize
$>$600 s
&
\scriptsize
30.7
s
&
\scriptsize
$>$600 s
\\
\midrule

{ \textsc{ webpages } } &

\scriptsize
16.0
s
&
\scriptsize
\textbf{
6.2
}
s
&
\scriptsize
14.5
s
&
\scriptsize
8.5
s
&
\scriptsize
7.1
s
&
\scriptsize
6.4
s
&
\scriptsize
$>$600 s
&
\scriptsize
9.0
s
&
\scriptsize
217.0
s
\\
\bottomrule

 \end{tabular}

\end{sidewaystable*}
%
%
%
%

\subsection{Entropy based hyperparameter optimization}

Now we proceed to entropy based evaluation. Given particular set of linear hypotheses $\mathcal{M}$ in $\Hspace $ we want to select optimal set of hyperparameters $\theta$ (such as number of hidden neurons or regularization parameter) which identify a particular model $\LinearOperator_\theta \in \mathcal{M} \subset \Hspace $. Instead of using expensive internal cross-validation (or other generalization error estimation technique like Err$^{0.632}$) we select such $\theta$ which maximizes our entropic measure. In particular we consider a simpified Cauchy-Schwarz Divergence based strategy where we select $\theta$ maximizing
$$
\Dcs(\nor(\LinearOperator_\theta^T \meanpoint^+, \var(\LinearOperator_\theta^T \Hlayer^+)),\nor(\LinearOperator_\theta^T \meanpoint^-, \var(\LinearOperator_\theta^T \Hlayer^-))),
$$
and
kernel density based entropic strategy~\cite{MELC} selecting $\theta$ maximizing
\begin{equation}
\Dcs( \de{ \LinearOperator_\theta^T \Hlayer^+ } , \de{ \LinearOperator_\theta^T \Hlayer^-}),
\label{dcscond}
\end{equation}
where $\de{ A } = \de{ A }_{\sigma(A)}$ is a Gaussian KDE using Silverman's rule of the window width~\cite{silverman1986density}
$$
\sigma(A) = \left ( \tfrac{4}{3|A|} \right )^{1/5}\std(A) \approx  \tfrac{1.06}{\sqrt[5]{|A|}}\std(A). 
$$

This way we can use whole given set for training and do not need to repeat the process, as $\Dcs$ is computed on the training set instead of the hold-out set. 

First, one can notice on Table~\ref{tab:enthyperdcsnor} and Table~\ref{tab:enthyperdcs} that such entropic criterion works well for EEM, EEKM and Support Vector Machines. On the other hand, it is not very well suited for ELM models. This confirms conclusions from our previous work on classification using $\Dcs$~\cite{MELC} where we claimed that SVMs are conceptually similar in terms of optimization objective, as well as widens it to the new class of models (EEMs). Second, Table~\ref{tab:enthyperdcsnor} shows that EEM and EEKM can truly select their hyperparameters using very simple technique requiring no model retrainings. Computation of (\ref{dcscond}) is linear in terms of training set and constant time if performed using precomputed projections of required objects (which are either way computed during EEM training). This make this 
very 
fast model even more robust.

\begin{sidewaystable*}[ph!]
\centering
\caption{\textsc{UCI} datasets \textsc{GMean} with parameters tuning based on selecting a model according to $\Dcs(\nor(\LinearOperator^T \meanpoint^+, \LinearOperator^T \Sigma^+ \LinearOperator),\nor(\LinearOperator^T \meanpoint^-, \LinearOperator^T \Sigma^- \LinearOperator))$ where $\LinearOperator$ is a linear operator found by a particular optimization procedure instead of internal cross validation}
\label{tab:enthyperdcsnor}
 \begin{tabular}{ccccccccccccc}
 
 \toprule
 &WELM$_{\textsc{sig}}$& EEM$_{\textsc{sig}}$ &WELM$_{\textsc{nsig}}$& EEM$_{\textsc{nsig}}$ &WELM$_{\textsc{rbf}}$   & EEM$_{\textsc{rbf}}$  &  LS-SVM$_{\textsc{rbf}}$ & EEKM$_{\textsc{rbf}}$ &  SVM$_{\textsc{rbf}}$\\
 \midrule
 
\textsc{ australian  } &

\scriptsize
51.2
\tiny $\pm 7.5$
&
\scriptsize
86.3
\tiny $\pm 4.8$
&
\scriptsize
50.3
\tiny $\pm 6.4$
&
\scriptsize
\textbf{
86.5
}
\tiny $\pm 3.2$
&
\scriptsize
50.3
\tiny $\pm 8.5$
&
\scriptsize
86.2
\tiny $\pm 5.3$
&
\scriptsize
58.5
\tiny $\pm 7.9$
&
\scriptsize
85.2
\tiny $\pm 5.6$
&
\scriptsize
85.7
\tiny $\pm 4.7$
\\
\midrule

\textsc{ breast-cancer  } &

\scriptsize
83.0
\tiny $\pm 4.3$
&
\scriptsize
97.0
\tiny $\pm 1.6$
&
\scriptsize
72.0
\tiny $\pm 6.6$
&
\scriptsize
97.1
\tiny $\pm 1.9$
&
\scriptsize
77.3
\tiny $\pm 5.3$
&
\scriptsize
97.3
\tiny $\pm 1.1$
&
\scriptsize
79.2
\tiny $\pm 7.7$
&
\scriptsize
96.9
\tiny $\pm 1.4$
&
\scriptsize
\textbf{
97.5
}
\tiny $\pm 1.2$
\\
\midrule

\textsc{ diabetes  } &

\scriptsize
52.3
\tiny $\pm 4.7$
&
\scriptsize
74.4
\tiny $\pm 4.0$
&
\scriptsize
51.7
\tiny $\pm 4.0$
&
\scriptsize
\textbf{
74.7
}
\tiny $\pm 5.2$
&
\scriptsize
52.1
\tiny $\pm 3.7$
&
\scriptsize
73.5
\tiny $\pm 5.9$
&
\scriptsize
60.1
\tiny $\pm 4.2$
&
\scriptsize
72.2
\tiny $\pm 5.4$
&
\scriptsize
73.2
\tiny $\pm 5.9$
\\
\midrule

\textsc{ german } &

\scriptsize
57.1
\tiny $\pm 4.0$
&
\scriptsize
69.3
\tiny $\pm 5.0$
&
\scriptsize
51.7
\tiny $\pm 3.0$
&
\scriptsize
\textbf{
72.4
}
\tiny $\pm 5.4$
&
\scriptsize
52.8
\tiny $\pm 6.3$
&
\scriptsize
70.9
\tiny $\pm 6.9$
&
\scriptsize
55.0
\tiny $\pm 4.3$
&
\scriptsize
67.8
\tiny $\pm 5.7$
&
\scriptsize
60.5
\tiny $\pm 4.5$
\\
\midrule

\textsc{ heart  } &

\scriptsize
68.6
\tiny $\pm 5.8$
&
\scriptsize
79.4
\tiny $\pm 6.9$
&
\scriptsize
65.6
\tiny $\pm 5.9$
&
\scriptsize
\textbf{
82.9
}
\tiny $\pm 7.4$
&
\scriptsize
60.3
\tiny $\pm 9.4$
&
\scriptsize
77.4
\tiny $\pm 7.2$
&
\scriptsize
66.2
\tiny $\pm 4.2$
&
\scriptsize
77.7
\tiny $\pm 7.0$
&
\scriptsize
76.5
\tiny $\pm 6.6$
\\
\midrule

\textsc{ ionosphere  } &

\scriptsize
62.7
\tiny $\pm 10.6$
&
\scriptsize
77.0
\tiny $\pm 12.8$
&
\scriptsize
68.5
\tiny $\pm 5.1$
&
\scriptsize
84.6
\tiny $\pm 9.1$
&
\scriptsize
69.5
\tiny $\pm 9.6$
&
\scriptsize
90.8
\tiny $\pm 5.2$
&
\scriptsize
72.8
\tiny $\pm 6.1$
&
\scriptsize
93.4
\tiny $\pm 4.2$
&
\scriptsize
\textbf{
94.7
}
\tiny $\pm 3.9$
\\
\midrule

\textsc{ liver-disorders  } &

\scriptsize
53.2
\tiny $\pm 7.0$
&
\scriptsize
68.5
\tiny $\pm 6.7$
&
\scriptsize
52.2
\tiny $\pm 11.8$
&
\scriptsize
62.1
\tiny $\pm 8.1$
&
\scriptsize
53.9
\tiny $\pm 8.0$
&
\scriptsize
\textbf{
71.4
}
\tiny $\pm 7.0$
&
\scriptsize
62.9
\tiny $\pm 7.8$
&
\scriptsize
69.6
\tiny $\pm 8.2$
&
\scriptsize
66.9
\tiny $\pm 8.0$
\\
\midrule

\textsc{ sonar  } &

\scriptsize
66.3
\tiny $\pm 6.1$
&
\scriptsize
66.1
\tiny $\pm 15.0$
&
\scriptsize
80.2
\tiny $\pm 7.4$
&
\scriptsize
76.9
\tiny $\pm 5.2$
&
\scriptsize
83.2
\tiny $\pm 6.9$
&
\scriptsize
82.8
\tiny $\pm 5.2$
&
\scriptsize
85.9
\tiny $\pm 4.9$
&
\scriptsize
\textbf{
87.7
}
\tiny $\pm 6.1$
&
\scriptsize
86.6
\tiny $\pm 3.3$
\\
\midrule

\textsc{ splice  } &

\scriptsize
51.8
\tiny $\pm 4.3$
&
\scriptsize
49.4
\tiny $\pm 5.5$
&
\scriptsize
64.9
\tiny $\pm 3.1$
&
\scriptsize
80.2
\tiny $\pm 2.6$
&
\scriptsize
60.8
\tiny $\pm 3.5$
&
\scriptsize
82.2
\tiny $\pm 3.5$
&
\scriptsize
\textbf{
89.7
}
\tiny $\pm 3.3$
&
\scriptsize
88.0
\tiny $\pm 4.0$
&
\scriptsize
89.5
\tiny $\pm 2.9$
\\
\midrule

 \end{tabular}
\end{sidewaystable*}

\begin{sidewaystable*}[ph!]
\centering
\caption{\textsc{UCI} datasets \textsc{GMean} with parameters tuning based on selecting a model according to $\Dcs(\de{ \LinearOperator^T \Hlayer^+ },\de{\LinearOperator^T \Hlayer^-})$ where $\LinearOperator$ is a linear operator found by a particular optimization procedure instead of internal cross validation}
\label{tab:enthyperdcs}
\begin{tabular}{ccccccccccccc}
 
 \toprule
 &WELM$_{\textsc{sig}}$& EEM$_{\textsc{sig}}$ &WELM$_{\textsc{nsig}}$& EEM$_{\textsc{nsig}}$ &WELM$_{\textsc{rbf}}$   & EEM$_{\textsc{rbf}}$  &  LS-SVM$_{\textsc{rbf}}$ & EEKM$_{\textsc{rbf}}$ &  SVM$_{\textsc{rbf}}$\\
 \midrule
 
\textsc{ australian  } &

\scriptsize
51.2
\tiny $\pm 7.5$
&
\scriptsize
86.3
\tiny $\pm 4.8$
&
\scriptsize
50.3
\tiny $\pm 6.4$
&
\scriptsize
\textbf{
86.5
}
\tiny $\pm 3.2$
&
\scriptsize
50.3
\tiny $\pm 8.5$
&
\scriptsize
86.2
\tiny $\pm 5.3$
&
\scriptsize
58.5
\tiny $\pm 7.9$
&
\scriptsize
85.2
\tiny $\pm 5.6$
&
\scriptsize
84.2
\tiny $\pm 4.1$
\\
\midrule

\textsc{ breast-cancer  } &

\scriptsize
83.0
\tiny $\pm 4.3$
&
\scriptsize
97.0
\tiny $\pm 1.6$
&
\scriptsize
72.0
\tiny $\pm 6.6$
&
\scriptsize
\textbf{
97.4
}
\tiny $\pm 1.2$
&
\scriptsize
77.3
\tiny $\pm 5.3$
&
\scriptsize
97.3
\tiny $\pm 1.1$
&
\scriptsize
79.3
\tiny $\pm 7.1$
&
\scriptsize
96.9
\tiny $\pm 1.4$
&
\scriptsize
96.3
\tiny $\pm 2.4$
\\
\midrule

\textsc{ diabetes  } &

\scriptsize
52.3
\tiny $\pm 4.7$
&
\scriptsize
74.4
\tiny $\pm 4.0$
&
\scriptsize
51.7
\tiny $\pm 4.0$
&
\scriptsize
\textbf{
74.7
}
\tiny $\pm 5.2$
&
\scriptsize
52.1
\tiny $\pm 3.7$
&
\scriptsize
73.5
\tiny $\pm 5.9$
&
\scriptsize
60.1
\tiny $\pm 4.2$
&
\scriptsize
72.2
\tiny $\pm 5.4$
&
\scriptsize
71.9
\tiny $\pm 5.4$
\\
\midrule

\textsc{ german } &

\scriptsize
57.1
\tiny $\pm 4.0$
&
\scriptsize
69.3
\tiny $\pm 5.0$
&
\scriptsize
51.7
\tiny $\pm 3.0$
&
\scriptsize
\textbf{
71.7
}
\tiny $\pm 5.9$
&
\scriptsize
52.8
\tiny $\pm 6.3$
&
\scriptsize
70.9
\tiny $\pm 6.9$
&
\scriptsize
54.4
\tiny $\pm 5.7$
&
\scriptsize
67.8
\tiny $\pm 5.7$
&
\scriptsize
59.5
\tiny $\pm 4.2$
\\
\midrule

\textsc{ heart  } &

\scriptsize
60.0
\tiny $\pm 9.2$
&
\scriptsize
79.4
\tiny $\pm 6.9$
&
\scriptsize
65.6
\tiny $\pm 5.9$
&
\scriptsize
\textbf{
82.9
}
\tiny $\pm 7.4$
&
\scriptsize
52.6
\tiny $\pm 9.0$
&
\scriptsize
77.4
\tiny $\pm 7.2$
&
\scriptsize
61.9
\tiny $\pm 5.8$
&
\scriptsize
77.7
\tiny $\pm 7.0$
&
\scriptsize
76.3
\tiny $\pm 7.7$
\\
\midrule

\textsc{ ionosphere  } &

\scriptsize
62.4
\tiny $\pm 8.1$
&
\scriptsize
77.0
\tiny $\pm 12.8$
&
\scriptsize
68.5
\tiny $\pm 5.1$
&
\scriptsize
84.6
\tiny $\pm 9.1$
&
\scriptsize
67.6
\tiny $\pm 9.8$
&
\scriptsize
90.8
\tiny $\pm 5.2$
&
\scriptsize
67.0
\tiny $\pm 10.7$
&
\scriptsize
\textbf{
93.4
}
\tiny $\pm 4.2$
&
\scriptsize
92.3
\tiny $\pm 4.6$
\\
\midrule

\textsc{ liver-disorders  } &

\scriptsize
50.9
\tiny $\pm 11.5$
&
\scriptsize
68.5
\tiny $\pm 6.7$
&
\scriptsize
50.4
\tiny $\pm 9.2$
&
\scriptsize
62.1
\tiny $\pm 8.1$
&
\scriptsize
53.9
\tiny $\pm 8.0$
&
\scriptsize
\textbf{
71.4
}
\tiny $\pm 7.0$
&
\scriptsize
62.9
\tiny $\pm 7.8$
&
\scriptsize
69.6
\tiny $\pm 8.2$
&
\scriptsize
66.9
\tiny $\pm 8.0$
\\
\midrule

\textsc{ sonar  } &

\scriptsize
66.3
\tiny $\pm 6.1$
&
\scriptsize
66.1
\tiny $\pm 15.0$
&
\scriptsize
80.2
\tiny $\pm 7.4$
&
\scriptsize
76.9
\tiny $\pm 5.2$
&
\scriptsize
62.9
\tiny $\pm 9.4$
&
\scriptsize
82.8
\tiny $\pm 5.2$
&
\scriptsize
83.6
\tiny $\pm 4.5$
&
\scriptsize
\textbf{
87.7
}
\tiny $\pm 6.1$
&
\scriptsize
86.6
\tiny $\pm 3.3$
\\
\midrule

\textsc{ splice  } &

\scriptsize
51.8
\tiny $\pm 4.3$
&
\scriptsize
33.1
\tiny $\pm 6.5$
&
\scriptsize
64.9
\tiny $\pm 3.1$
&
\scriptsize
80.2
\tiny $\pm 2.6$
&
\scriptsize
60.8
\tiny $\pm 3.5$
&
\scriptsize
82.2
\tiny $\pm 3.5$
&
\scriptsize
85.4
\tiny $\pm 4.1$
&
\scriptsize
88.0
\tiny $\pm 4.0$
&
\scriptsize
\textbf{
89.5
}
\tiny $\pm 2.9$
\\
\midrule

 \end{tabular}
\end{sidewaystable*}
\subsection{EEM stability}

It was previously reported~\cite{huang2006extreme} that ELMs have very stable results in the wide range of the number of hidden neurons. We performed analogous experiments with EEM on \textsc{UCI} datasets. We trained models for 100 increasing hidden layers sizes ($h=5,10,\ldots,500$) and plotted resulting \textsc{GMean} scores on Fig.~\ref{fig:stability}.

\begin{figure}[H]
\centering
 \includegraphics[width=10cm]{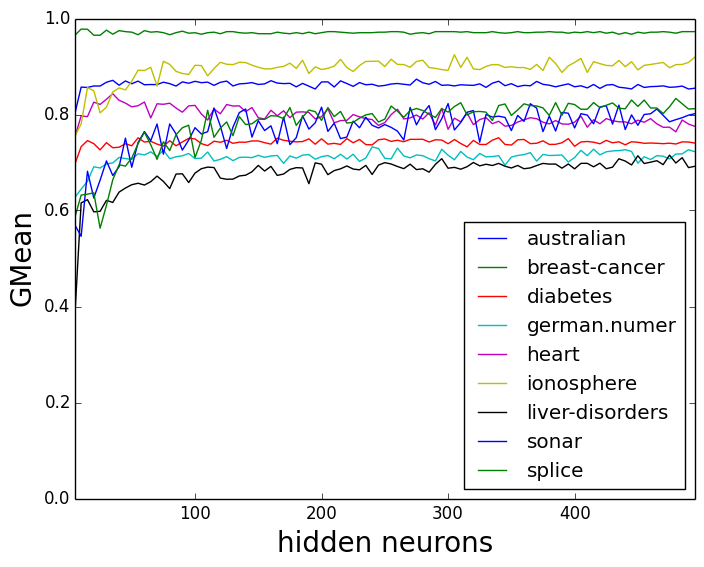}
 \caption{Plot of the EEM's (with RBF activation function) \textsc{GMean} scores from cross validation experiments for increasting sizes of hidden layer.}
 \label{fig:stability}
\end{figure}

One can notice that similarly to ELM proposed methods are very stable. Once machine gets enough neurons (around 100 in case of tested datasets) further increasing of the feature space dimension has minor effect on the generalization capabilities of the model. It is also important that some of these datasets (like sonar) do not even have 500 points, so there are more dimensions in the Hilbert space than we have points to build our covariance estimates, and even though we still do not observe any rapid overfitting.

\section{Conclusions}

In this paper we have presented Extreme Entropy Machines, models derived from the information theoretic measures and applied to the classification problems. Proposed methods are strongly related to the concepts of Extreme Learning Machines (in terms of general workflow, rapid training and randomization) as well as Support Vector Machines (in terms of margin maximization interpretation as well as LS-SVM duality). 

Main characteristics of EEMs are:
\begin{itemize}
 \item information theoretic background based on differential and Renyi's quadratic entropies,
 \item closed form solution of the optimization problem,
 \item generative training, leading to direct probability estimates,
 \item small number of metaparameters,
 \item good classification results,
 \item rapid training that scales well to hundreads of thousands of examples and beyond,
 \item theoretical and practical similarities to the large margin classifiers and Fischer Discriminant.
\end{itemize}

Performed evaluation showed that, similarly to ELM, proposed EEM is a very stable model in terms of the size of the hidden layer and achieves comparable classification results to the ones obtained by SVMs and ELMs. Furthermore we showed that our method scales better to truly big datasets (consisting of hundreads of thousands of examples) without sacrificing results quality.

During our considerations we pointed out some open problems and issues, which are worth investigation: 
\begin{itemize}
 \item Can one construct a closed-form entropy based classifier with different distribution families than Gaussians?
 \item Is there a theoretical justification of the stability of the extreme learning techniques?
 \item Is it possible to further increase achieved results by performing unsupervised entropy based optimization in the hidden layer?
\end{itemize}

\section*{Acknowledgment}

We would like to thank Daniel Wilczak from Institute of Computer Science and Computational Mathematics of Jagiellonian Univeristy for the access to the Fermi supercomputer, which made numerous experiments possible.

\bibliographystyle{plain}

\end{document}